\newtheorem{thm}{Theorem}
\newtheorem{lemma}[thm]{Lemma}
\title{Clique Analysis and Bypassing in Continuous-Time Conflict-Based Search}
\author{Thayne T. Walker}
\affiliation{
  \institution{University of Denver, Lockheed Martin Corporation}
  \city{Denver}
  \country{USA}}
\email{thayne.walker@du.edu}
\author{Nathan R. Sturtevant}
\affiliation{
  \institution{Department of Computing Science, Alberta Machine Intelligence Institute (Amii), University of Alberta}
  \city{Edmonton}
  \country{Canada}}
\email{nathanst@ualberta.ca}
\author{Ariel Felner}
\affiliation{
  \institution{Ben Gurion University}
  \city{Be'er-Sheva}
  \country{Israel}}
\email{felner@bgu.ac.il}
\begin{abstract}
While the study of unit-cost Multi-Agent Pathfinding (MAPF) problems has been popular, many real-world problems require continuous time and costs due to various movement models.
In this context, this paper studies symmetry-breaking enhancements for Continuous-Time Conflict-Based Search (CCBS), a solver for continuous-time MAPF.
Resolving conflict symmetries in MAPF can require an exponential amount of work. We adapt known enhancements from unit-cost domains for CCBS: \textit{bypassing}, which resolves cost symmetries and \textit{biclique constraints} which resolve spatial conflict symmetries. We formulate a novel combination of biclique constraints with disjoint splitting for spatial conflict symmetries. Finally, we show empirically that these enhancements yield a statistically significant performance improvement versus previous state of the art, solving problems for up to 10\% or 20\% more agents in the same amount of time on dense graphs.
\end{abstract}
\begin{document}

%%% The following commands remove the headers in your paper. For final 
%%% papers, these will be inserted during the pagination process.

\pagestyle{fancy}
\fancyhead{}

%%% The next command prints the information defined in the preamble.

\maketitle

\section{Introduction} The objective of multi-agent pathfinding (MAPF) is to find paths for multiple agents such that each agent reaches its goal without conflicting with other agents. Agents paths are conflicting if at any time their shapes overlap. MAPF has applications in warehouses \cite{li2021lifelong}, package delivery~\cite{choudhury2021efficient}, games~\cite{Silver05,botea2013pathfinding}, firefighting~\cite{roldan2021survey}, search and rescue~\cite{scherer2015autonomous} and intersection management~\cite{dresner2008multiagent,vsvancara2019online}.

A significant amount of prior work has focused on ``classic'' MAPF with a state space represented by a grid or planar graph with unit-cost edges~\cite{stern2019multi}. Therefore, in classic MAPF, all actions take one time step and agents always occupy exactly one vertex in a time step. These limitations simplify the problem, but cannot be applied to domains which may exhibit variable size agents and continuous-time, variable-duration motion and wait actions. We seek optimal solutions to the continuous-time MAPF problem (MAPF\textsubscript{R})~\cite{walker2018extended,andreychuk2019continuous}, denoted MAPF\textsubscript{R} for real-valued action durations and costs on general graphs (e.g., planar, non-panar, unit-cost and non-unit cost graphs).

Continuous-Time Conflict-Based Search (CCBS)~\cite{andreychuk2022multi} is a solver for MAPF\textsubscript{R}. CCBS re-formulates the Conflict-Based Search (CBS) algorithm~\cite{CBS} to allow variable-duration wait actions and constraints which account for continuous-time execution. CCBS was shown to be effective on a several settings that are inspired by real-world applications. Additional enhancements, such as heuristics~\cite{li2019improved}, disjoint splitting (DS)~\cite{li2019disjoint} and conflict prioritization~\cite{ICBS} were added to CCBS~\cite{andreychuk2021improving}. These enhancements improve the runtime of CCBS.
In contrast to other prior optimal continuous-time approaches~\cite{walker2018extended,walker2020generalized,walker2021conflict} which assume only fixed-duration wait actions, CCBS plans optimal, arbitrary duration wait times.

CCBS represents a significant advancement for MAPF\textsubscript{R}. However, conflict symmetries continue to pose a problem for this algorithm. A \emph{conflict symmetry}~\cite{li2019symmetry} occurs when two or more agents are situated such that one or both agents must increase their path cost to avoid conflict. For optimal algorithms like CCBS, this means that many lower-cost alternate paths must be explored  before proving that the cost increase is necessary. Conflict symmetries can cause an exponential amount of work to resolve~\cite{li2019symmetry}. In this paper, we address conflict symmetries by adapting and building new symmetry-breaking enhancements for CCBS:
\begin{itemize}[leftmargin=.35cm]
    \item We adapt the bypass (BP) enhancement~\cite{CBSBP}, originally formulated for CBS, to be used with CCBS.
    \item We adapt Biclique constraints (BC)~\cite{walker2020generalized}, originally formulated for CBS, to be used with CCBS.
    \item We combine disjoint splitting (DS)~\cite{li2019disjoint} as formulated for CCBS~\cite{andreychuk2021improving} with (BC): \emph{disjoint bicliques} (DB)
    \item We newly re-formulate BC for k-partite cliques and combine with DS: \emph{disjoint k-partite cliques} (DK)
\end{itemize}

This paper is organized as follows: We first provide a definition of the MAPF\textsubscript{R} problem. Next, we describe the CCBS algorithm and related work. This is followed by a description of the new symmetry breaking enhancements. Finally, we present a comprehensive ablation study on all of the enhancements.

\section{Problem Definition}
MAPF was originally defined for a ``classic" setting~\cite{stern2019multi} where the movements of agents are coordinated on a grid, which is a 4-neighbor planar graph. Edges have a unit cost/unit time duration and agents occupy a point in space. Thus, two agents can only have conflicts when on the same vertex at the same time, or traversing the same edge in opposite directions. MAPF\textsubscript{R}~\cite{walker2018extended,andreychuk2019continuous}, an extension of MAPF for \emph{real-valued} action durations, uses a weighted graph $G{=}(V,E)$ which may be non-planar. Every vertex $v{\in}V$ is associated with coordinates in a metric space and every edge $e{\in}E$ is associated with a positive real-valued edge weight $w(e){\in}\mathbb{R}_+$. For the purposes of this paper, weights represent the times it takes to traverse edges. However, time duration and cost can be treated separately. There are $k$ agents, $A{=}\{1,..,k\}$. Each agent has a start and a goal vertex $V_s{=}\{start_1,..,start_k\}{\subseteq}V$ and $V_g{=}\{goal_1,..,goal_k\}{\subseteq}V$ such that $start_i{\neq} start_j$ and $ goal_i{\neq}goal_j$ for all $i{\neq}j$.

A \emph{solution} to a MAPF\textsubscript{R} instance is $\Pi{=}\{\pi_1,..,\pi_k\}$, a set of single-agent \emph{paths} which are sequences of \emph{states}. A state $s{=}(v,t)$ is a pair composed of a vertex $v{\in}V$ and a time $t{\in}\mathbb{R}_+$.
A path for agent $i$ is a sequence of $d{+}1$ states $\pi_i{=}[s_i^0,..,s_i^d]$, where $s_i^0{=}(start_i,0)$ and $s_i^d{=}(goal_i,t_g)$ where $t_g$ is the time  the agent arrives at its goal and all vertices in the path follow edges in $E$.

Agents have a shape which is situated relative to an agent-specific \emph{reference point}~\cite{li2019large}. Agents' shapes may vary, but this paper uses circular agents with a radius of $\sqrt{\scriptstyle 2}/4$ units. Agents move along edges along a straight vector in the metric space. Traversing an edge is called an \emph{action}, $a{=}(s,s')$, where $s$ and $s'$ are a pair of neighboring states. \emph{Wait actions} of any positive, real-valued duration is allowed at any vertex. Variable velocity and acceleration are allowed. For simplicity, this paper assumes fixed velocity motion with no acceleration.

A \emph{conflict} happens when two agents perform actions $\langle a_i, a_j\rangle$ such that their shapes overlap at the same time~\cite{walker2018extended}. A \emph{feasible solution} has no conflicts between any pairs of its constituent paths. The objective is to minimize the sum-of-costs $c(\Pi){=}\sum_{\pi\in\Pi} c(\pi)$, where $c(\pi)$ is the sum of edge weights of all edges traversed in $\pi$.
We seek $\Pi^*$, a solution with minimal cost among all feasible solutions. Optimization of the classic MAPF problem is NP-hard~\cite{YuLavalle2013}, hence, optimization of the MAPF\textsubscript{R} problem is also NP-hard.

\section{Background}
We now describe CCBS and other prior work.
\subsection{Conflict-Based Search}
Continuous-time Conflict-Based Search (CCBS)~\cite{andreychuk2022multi} is based on the classic Conflict-Based Search (CBS)~\cite{CBS} algorithm, so we describe it next. CBS performs search on two levels. The \emph{high level} searches a constraint tree ($CT$). Each node $N$ in the $CT$ contains a solution $N.\Pi$, and a set of constraints $N.C$. A \emph{constraint} blocks an agent from performing the action(s) that caused the conflict and is defined as a tuple $\langle i,v,t\rangle$, where $i$ is the agent, $v$ is the vertex and $t$ is the time. Each path $\pi_i{\in}N.\Pi$ of agent $i$ in the root node is constructed using a \emph{low-level} search without taking any other agents into account. Next, CBS checks for conflicts between any pairs of paths $\pi_i$ and $\pi_j$ in $N.\Pi$. If $N.\Pi$ contains no conflict, then $N$ is a goal node and CBS terminates. If $N.\Pi$ contains a conflict between any $\pi_i$ and $\pi_j$, then CBS performs a \emph{split}, meaning that it generates two child nodes $N_i$ and $N_j$ of $N$ and adds constraints $c_i$ and $c_j$ to $N_i.C$ and $N_j.C$ respectively. 

Next, CBS re-plans $\pi_i{\in}N_i.\Pi$ and $\pi_j{\in}N_j.\Pi$  with  constraints $c_i$ and $c_j$ and other constraints inherited from ancestor nodes so that the current conflict and previously detected conflicts are avoided. CBS searches the tree in a best-first fashion, prioritized by the sum-of-costs. CBS terminates when a feasible solution is found.

A significant amount of improvements for CBS have been proposed such as adding high-level heuristics~\cite{felner2018adding,li2019improved}, conflict prioritization~\cite{ICBS}, allowing multiple constraints per split~\cite{li2019large}, disjoint splitting~\cite{li2019disjoint} and conflict symmetry resolution~\cite{li2019symmetry,zhang2020multi,li2021pairwise}. Some enhancements were also proposed for MAPF\textsubscript{R}, such as kinodynamic constraints~\cite{cohen2019optimal,kottinger2022conflict,wen2022cl},  biclique constraints (BC)~\cite{walker2020generalized} (which will be described later) and CCBS itself~\cite{andreychuk2022multi}, which we describe next.

\begin{figure}[t!]
\begin{tikzpicture}[scale=.9]
\path[-,thin] (0,0)  edge node[yshift=5pt,xshift=-5pt] {\small 2} (1.73,1);
\path[-,thin] (0,0)  edge node[yshift=-6pt] {\small$\sqrt{3}$} (1.73,0);
\path[-,thin] (1.73,0)  edge node[xshift=5pt] {\small 1} (1.73,1);

\node[circle,draw=black,fill=black,inner sep=0pt,minimum size=2pt] at (0,0) {};
\node[circle,draw=black,fill=black,inner sep=0pt,minimum size=2pt] at (1.73,1) {};
\node[circle,draw=black,fill=black,inner sep=0pt,minimum size=2pt] at (1.73,0) {};
\node at (-.2,-.2) {A};
\node at (1.93,1.2) {B};
\node at (1.93,-.2) {C};
\node at (.9,-.9) {(a)};
\end{tikzpicture}
\begin{tikzpicture}[scale=.9]
\path[-,thin] (0,0)  edge (1.73,1);
\path[-,thin] (0,0)  edge (1.73,0);
\path[-,thin] (1.73,0)  edge (1.73,1);
\path[->,>=stealth,dashed,very thick,color=red,bend left] (0,.2)  edge (1.4,1.0);
\path[->,>=stealth,dashed,very thick,color=blue,bend left] (2.1,1) edge node {} (2.1,0);
\path[->,>=stealth,dashed,very thick,color=orange!80!black,bend left] (1.73,-.33) edge node {} (0,-.33);

\node[circle,draw=red,text=red,thick,fill=red!10,inner sep=0pt,minimum size=17pt] at (0,0) {$x$};
\node[circle,draw=blue,text=blue,thick,fill=blue!10,inner sep=0pt,minimum size=17pt] at (1.73,1) {$y$};
\node[circle,draw=orange!80!black,text=orange!60!black,thick,fill=orange!20,inner sep=0pt,minimum size=17pt] at (1.73,0) {$z$};
\node at (.9,-.9) {(a)};
\end{tikzpicture}
\begin{tikzpicture}[scale=.9]
\node[circle,draw=red,text=red,thick,fill=red!10,inner sep=0pt,minimum size=17pt] (A) at (0,0) {$x$};
\node[circle,draw=blue,text=blue,thick,fill=blue!10,inner sep=0pt,minimum size=17pt] (B) at (1.73,1) {$y$};
\node[circle,draw=orange!80!black,text=orange!60!black,thick,fill=orange!20,inner sep=0pt,minimum size=17pt] (C) at (1.73,0) {$z$};
\draw [->,>=stealth,dashed,very thick,color=blue] (B) edge[loop right]node[text=black]{\small .36} (B);
\draw [->,>=stealth,dashed,very thick,color=orange!80!black] (C) edge[loop right]node[text=black]{\small .36} (C);
\draw[->,>=stealth,dashed,very thick,color=red] (A) edge (B);
\path[->,>=stealth,dashed,very thick,color=blue] (B) edge(C);
\path[->,>=stealth,dashed,very thick,color=orange!80!black] (C) edge node {} (A);
\node at (1.2,-.9) {(b)};
\end{tikzpicture}
\caption{(a) A planning graph, (b) a MAPF\textsubscript{R} instance for agents $x$, $y$ and $z$, and (c) a solution that uses fractional wait times.}
\label{fig:hard-example}
\end{figure}
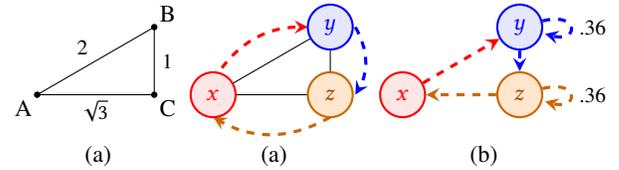

\subsection{Safe Interval Path Planning}

Safe Interval Path Planning (SIPP)~\cite{phillips2011sipp} is an algorithm for planning a single agent on the same graph as moving obstacles. In the case of CCBS, the moving obstacles are other agents. The usage of SIPP with CCBS will be discussed next. SIPP uses an A*-based algorithm with a specialized successor generation routine. During successor generation, SIPP computes the actions available to the agent at each graph vertex by removing actions which would result in conflicts and/or adding wait actions with a specific duration to avoid conflicts. SIPP does this by computing a set of \emph{safe intervals} for each vertex, that is, time intervals in which an agent may occupy a vertex without conflicting with moving obstacles. In this way, SIPP guarantees conflict-free shortest paths.

\subsection{Continuous-time Conflict-Based Search}

CCBS~\cite{andreychuk2022multi} modifies CBS by allowing continuous-time actions. This is accomplished by adding additional functionality to CBS:
\begin{itemize}[leftmargin=.35cm]
    \item CCBS uses continuous-time-and-space collision detection. It also computes exact wait times instead of using fixed duration wait actions~\cite{walker2019collision,guy2019guide,jimenez20013d}.
    \item CCBS handles \emph{durative} conflicts, (where agents' shapes overlap for a period of time), by utilizing time-range constraints~\cite{atzmon2018robust}.
    \item CCBS uses SIPP~\cite{phillips2011sipp} at the low level. In CCBS, SIPP is adapted to interpret time-range constraints~\cite{atzmon2018robust} (i.e., unsafe intervals) as safe intervals. Safe intervals are used to generate exact-duration wait actions to avoid conflicts.
\end{itemize}

We now describe CCBS in detail. An example for CCBS is illustrated in Figure \ref{fig:hard-example}. Figure \ref{fig:hard-example}(b) shows a problem instance on the simple planning graph in Figure \ref{fig:hard-example}(a) in which three agents exist, one at each vertex. Each of the agents needs to rotate one edge in the clockwise direction (or two edges in the counter-clockwise direction) in order to reach their goal. Assuming an agent radius of $\sqrt{\scriptstyle 2}/4$, actions $A{\rightarrow}B$ and $C{\rightarrow}A$ conflict if taken simultaneously.

After CCBS detects the conflict, \emph{time-range} constraints are constructed for the agents. Time-range constraints block agents from performing actions inside of a given time range. This is done by computing the delay time necessary for the action $C{\rightarrow}A$ to avoid conflict with the action $A{\rightarrow}B$ (and vice-versa). This delay time is used to create an \emph{unsafe interval}, the interval in which, if the action is taken, it will conflict. In this case, the unsafe interval for action $C{\rightarrow}A$ is $[0,0.36)$. Hence, if action $C{\rightarrow}A$ is delayed by 0.36, it can be executed without conflicting with action $A{\rightarrow}B$.

After CCBS constructs the time-range constraint for agent $z$, the low-level SIPP solver is called. CCBS re-interprets the unsafe interval $[0,0.36)$ to the safe interval, $[0.36,\infty)$ for SIPP. Thus when SIPP is called for agent $z$, it will be forced to wait 0.36 time steps before executing action $C{\rightarrow}A$, and the conflict with the action $A{\rightarrow}B$ is avoided. The wait action is shown by the self-loop on agent $z$ in the solution, Figure \ref{fig:hard-example}(c).

This problem instance is unsolvable without a non-unit cost wait action. If, for example, agent $z$ were to wait one full time step, it would cause agent $y$ to wait one full time step as well, causing it to conflict with agent $x$. This is one reason why arbitrary-duration wait actions are important for MAPF\textsubscript{R}. In addition, lower-cost solutions are possible with arbitrary-duration wait actions, since agents are allowed to wait for fractional times instead of whole time steps.

There are several enhancements for CCBS as described so far. The CCBS authors introduced a high-level heuristic based on the max-weight independent set problem. It was formerly formulated as as an integer linear program (ILP) for classic MAPF~\cite{li2019improved}, but reformulated for continuous time as a linear program (LP)~\cite{andreychuk2021improving}. Finally, a special formulation of disjoint splitting~\cite{andreychuk2021improving} was added to CCBS. Disjoint splitting is now explained in further detail.

\subsection{Disjoint Splitting}

Disjoint splitting (DS)~\cite{li2019disjoint} is a technique for CBS which helps avoid resolving the same conflict multiple times in different sub-trees of the CT. The procedure for DS is as follows: when performing a split, one child node uses a \emph{negative constraint} defined as a tuple $\langle i,v,t\rangle$ and causes the agent ($i$) to avoid a vertex ($v$) at a specific time ($t$). The other child node uses a \emph{positive constraint}, $\langle i,v,t\rangle$, which forces the agent ($i$) to pass through a vertex ($v$) at a specific time ($t$). A positive constraint for agent $i$ also acts as a negative constraint for all other agents so that they avoid conflicting with agent $i$.

Positive constraints are enforced at the low level by adding time-specific sub-goals or \emph{landmarks} to the search. The constraints in both child nodes are for the same agent, but there is a choice of which conflicting agent to split on. It was shown that disjoint splitting helps CBS to do less work in general~\cite{li2019disjoint}.

The implementation of DS for CCBS differs from CBS in two ways~\cite{andreychuk2021improving}: (1) Constraints for agent $i$ in CCBS include a time range $\langle i,v,t_0,t_2\rangle$~\cite{atzmon2018robust}. Since the arrival at a landmark location is allowed at multiple times, special logic is required to determine which exact time is optimal and feasible with respect to all other constraints. (2) Unlike DS for Classic MAPF, positive constraints do not act as a negative constraint for all other agents. Instead, a single negative constraint is added for agent $j$ to help it avoid the landmark location for agent $i$.

This second difference is a limitation that must be solved in CCBS using bipartite analysis, covered later in this paper. Despite this weakness, DS yields consistent improvements in runtime performance versus the original splitting method~\cite{andreychuk2021improving}.

\subsection{Biclique Constraints}
Biclique constraints was proposed for MAPF\textsubscript{R}~\cite{walker2020generalized} with unit-cost wait actions, but it has never been studied with variable-duration wait actions until now. Recall from the discussion of CBS that, during a split, agents $i$ and $j$ each receive a new constraint $c_i$ and  $c_j$ in their respective child nodes. For BC to work, CCBS must be combined with multi-constraint CBS (MCBS)~\cite{li2019large}, which allows each agent to receive \emph{sets} of one or more new constraints $C_i$ and $C_j$, respectively. Completeness is ensured only when the actions blocked by $C_i$ are mutually conflicting with all actions blocked by $C_j$. This is known as the mutually disjunctive property~\cite{li2019large,atzmon2019multi} of constraint sets. Constraint sets are \emph{valid} iff no solution exists when both agents $i$ and $j$ violate any constraints $c_i{\in}C_i$ and $c_j{\in}C_j$, respectively, simultaneously.
One approach to discovering valid sets of constraints for MAPF\textsubscript{R} settings involves analysis of bipartite conflict graphs~\cite{walker2020generalized}. See Figure \ref{fig:BC}.

\begin{figure}[t!]
\begin{tikzpicture}[scale=1]
\node (v) at (1.75,1.75) {\tiny $\;$};
\draw[step=1cm,color=gray,shift={(.5,0.25)}] ([shift={(-1.26,-1.51)}]v) grid (2,2);

\node[circle,draw=red,text=red,text=red,thick,fill=red!10,inner sep=0pt,minimum size=20pt] at ([shift={(-.75,0)}]v) {$x$};
\node[circle,draw=blue,text=blue,text=blue,thick,fill=blue!10,inner sep=0pt,minimum size=20pt] at ([shift={(-.75,-1)}]v) {$y$};

\path[->,>=stealth,dashed,very thick,color=blue,bend left] (1.2,1)  edge node {} (2,1.75);
\path[->,>=stealth,dashed,very thick,color=red,bend right] (1.2,1.5) edge node {} (2,.75);

\node at ([shift={(-.25,-1.8)}]v) {(a)};

\end{tikzpicture}
\begin{tikzpicture}[scale=1,
label/.style={rectangle,draw,black,fill=white,text=black,inner sep=1pt}]
\node (v) at (1.75,1.75) {\tiny $\;$};
\draw[step=1cm,color=gray,shift={(.5,0.25)}] ([shift={(-1.26,-1.51)}]v) grid (2,2);

\node[circle,draw=red,text=red,text=red,thick,fill=red!10,inner sep=0pt,minimum size=20pt] at ([shift={(-.75,0)}]v) {$x$};
\node[circle,draw=blue,text=blue,text=blue,thick,fill=blue!10,inner sep=0pt,minimum size=20pt] at ([shift={(-.75,-1)}]v) {$y$};

\path[->,>=stealth,very thick,color=blue] (1.25,1)  edge node {} (2,1.75);
\node at (1.6,1.42) [label] (l1) {\small 5 };
\path[->,>=stealth,very thick,color=red] (1.25,1.5) edge node {} (2,.75);
\node at (1.6,1.08) [label] (l1) {\small 2 };
\path[->,>=stealth,very thick,color=blue] (1.25,.75)  edge node {} (2,.75);
\node at (1.6,.75) [label] (l1) {\small 6 };
\path[->,>=stealth,very thick,color=red] (1.25,1.75) edge node {} (2,1.75);
\node at (1.6,1.75) [label] (l1) {\small 1 };
\path[->,>=stealth,very thick,color=blue] (1.1,1.1)  edge node {} (1.1,1.75);
\node at (1.15,1.3) [label] (l1) {\small 4 };
\path[->,>=stealth,very thick,color=red] (.9,1.4) edge node {} (.9,.75);
\node at (.85,1.2) [label] (l1) {\small 3 };

%\node at (.2,1.75) [rectangle,inner sep=0pt] {A};
%\node at (.2,0.75) [rectangle,inner sep=0pt] {B};
%\node at (1,-.1) [rectangle,inner sep=0pt] {1};
%\node at (2,-.1) [rectangle,inner sep=0pt] {2};
%\node at (3,-.1) [rectangle,inner sep=0pt] {3};

\node at ([shift={(-.25,-1.8)}]v) {(b)};
% bounding box...
%\draw (current bounding box.north east) rectangle (current bounding box.south west);
\end{tikzpicture}
\begin{tikzpicture}[scale=1,
label/.style={rectangle,draw,black,fill=white,text=black,inner sep=1pt}]
\node (v) at (1.75,1.75) {\tiny $\;$};

\node[circle,draw=red,text=red,text=red,thick,fill=red!10,inner sep=0pt,minimum size=20pt] at ([shift={(-.75,0)}]v) {$x$};
\node[circle,draw=blue,text=blue,text=blue,thick,fill=blue!10,inner sep=0pt,minimum size=20pt] at ([shift={(-.75,-1)}]v) {$y$};

\path[->,>=stealth,very thick,color=blue] (1.25,1)  edge node {} (2,1.75);
\node at (1.7,1.52) [label] (l1) {\small 5 };
\path[->,>=stealth,very thick,color=red] (1.25,1.5) edge node {} (2,.75);
\node at (1.71,1.03) [label] (l1) {\small 2 };
\path[->,>=stealth,very thick,color=blue] (1.1,1.1)  edge node {} (1.1,1.75);
\node at (1.2,1.27) [label] (l1) {\small 4 };

\node at (1.8,1.85) [rectangle,inner sep=0pt] {\scriptsize[0, 0.71)};
\node at (.65,1.25) [rectangle,inner sep=0pt] {\scriptsize[0, 0.58)};
%\node at (.2,0.75) [rectangle,inner sep=0pt] {B};
%\node at (1,-.1) [rectangle,inner sep=0pt] {1};
%\node at (2,-.1) [rectangle,inner sep=0pt] {2};
%\node at (3,-.1) [rectangle,inner sep=0pt] {3};

\node at ([shift={(-.45,-1.8)}]v) {(c)};
% bounding box...
%\draw (current bounding box.north east) rectangle (current bounding box.south west);
\end{tikzpicture}
\begin{tikzpicture}[scale=1.6,
label/.style={rectangle,draw,black,fill=white,text=black,inner sep=1pt}]

\node at (1,2.5) [label] (l1) {\small 1 };
\node at (1,2.18) [rectangle,inner sep=0pt] {\scriptsize[0, 0.58)};
\node at (1,2) [label,very thick] (l2) {\small 2 };
\node at (1,1.5) [label,very thick] (l3) {\small 3 };

\node at (2,2.5) [label,very thick] (l4) {\small 4 };
\node at (2,2) [label,very thick] (l5) {\small 5 };
\node at (2,1.5) [label] (l6) {\small 6 };

\draw[-,draw=gray] (l1) -- (l5);
\draw[-,draw=gray ] (l2) -- (l6);
\draw[-,very thick ] (l2) -- (l4);
\draw[-,very thick ] (l2) -- (l5);
\draw[-,very thick ] (l3) -- (l4);
\draw[-,very thick ] (l3) -- (l5);

%\node at (.65,.85) [rectangle,fill=white,inner sep=1pt] {};
\node at (1.5,1.2) [rectangle,fill=white,inner sep=0pt] {(d)};
\node at (1,1.05) [rectangle,text=white,inner sep=0pt] {.};
\end{tikzpicture}
\caption{An example of bipartite conflict analysis. (a) A MAPF instance, (b) an enumeration of actions (wait actions omitted), (c) an illustration of unsafe intervals for action 2 and (d) a bipartite conflict graph and biclique.}
\label{fig:BC}
\end{figure}
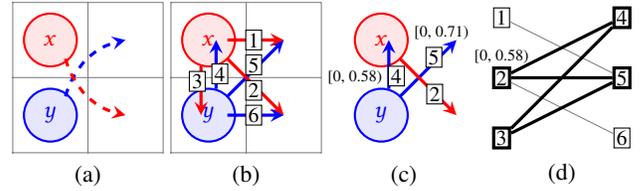

%\subsubsection{Bipartite Conflict Graphs}

Figure \ref{fig:BC}(a) shows an example problem where two agents must cross paths. Figure \ref{fig:BC}(b) shows an enumeration of all actions available to two agents at overlapping timeframes, (wait actions omitted). Figure \ref{fig:BC}(d) shows the bipartite conflict graph for the enumerated actions. A bipartite conflict graph (BCG)~\cite{walker2020generalized} is constructed by creating two sets of nodes for the actions available to the two agents during overlapping timeframes. The nodes for one agent are arranged on the left, and nodes for the other agent are arranged on the right. Then edges are added between nodes for pairs of actions that conflict. The graph is bipartite because no node on the left is connected to any other node on the left, similarly for nodes on the right, but nodes on the left may be connected to nodes on the right.

In order to choose a mutually disjunctive set of actions to use as constraints, the nodes chosen must form a bipartite clique or \emph{biclique} in the BCG. That is, every node chosen for the set on the left must be connected to every node chosen for the set on the right of the BCG. A biclique is shown with thick lines in Figure \ref{fig:BC}(d), where the set of nodes in the biclique is $\{2,3\}$ and $\{4,5\}$ respectively. In practice, one can find a \emph{max-vertex biclique} (a biclique with a maximal number of vertices) in polynomial time~\cite{walker2020generalized}.

\label{sec:unsafe-interval}Biclique nodes in a BCG can be annotated with unsafe intervals for SIPP. This is done by computing the unsafe times (i.e., the amount of time one agent should wait to avoid conflict) for each neighboring node in the biclique as shown for action 2 in Figure \ref{fig:BC}(c). Unsafe interval computation can be done using a binary search approach~\cite{AASIPP} or, for circular agents, using an algebraic approach~\cite{walker2019collision}. In this example, action 2 cannot be performed in the time range $[0,0.58)$ in order to avoid conflict with action 4, and $[0,0.71)$ to avoid conflict with action 5. The intersection of those intervals is used to annotate the action in the biclique as shown for action 2 in Figure \ref{fig:BC}(d). Avoiding execution of the action in the intersected unsafe interval ensures the mutually-conflicting property~\cite{walker2020generalized}.

In the example, blocking action 2 in the intersected interval ($[0,0.58)\cap[0,0.71){=}[0,0.58)$) ensures that the interval for which action 2 is blocked conflicts with all other actions in the biclique, namely actions 4 and 5. If the interval $[0,0.71)$ were erroneously chosen instead, action 2 would be blocked for part of the timeframe ($[.58,.71)$) in which it does not conflict with action 4, potentially blocking a feasible and/or optimal solution that uses action 4 in that timeframe. This technique is necessary for creating valid sets of time-range constraints required by the SIPP routine of CCBS.

\section{New Enhancements}

We now discuss new enhancements and their implementation.

\subsection{Bypass}

The bypass enhancement (BP)~\cite{CBSBP} is a symmetry breaking technique used to avoid some splits in the CT. BP was never implemented for CCBS, and to our knowledge, no study has been performed to determine its effectiveness. Our intent in including BP in this paper is to experiment with its effectiveness in continuous-time domains. Our findings show that it is very effective in problem instances with similarities to ``classic'' MAPF, but much less effective in certain continuous-time settings. These details will be discussed in the empirical results section.

The implementation of BP for CCBS is straightforward, and follows the original formulation, with some adaptations for continuous-time. BP (for both CBS and CCBS) inspects the paths for two agents involved in a conflict. If a new path of the same cost is available for one of the conflicting agents such that: (1) the new path does not have an increased cost, (2) the new path respects new constraints which are required to avoid the conflict that caused the split and (3) the new path has fewer conflicts with all agents than the respective path in the parent node, this new path is called a \emph{bypass}. If a bypass is found, child nodes are not generated, instead, the current node is updated with the bypass path for one of the agents and re-inserted into the OPEN list. This enhancement improves performance by avoiding splits in the tree which would otherwise result in two new sub-trees.

\subsection{New Biclique Constraints for CCBS}
In this paper, we tested BC with CCBS for the first time. Although the use of max-vertex bicliques for BC was shown to be very effective in continuous-time domains with \textit{fixed} wait actions~\cite{walker2020generalized}, when applied to CCBS, which computes \textit{arbitrary} wait actions, we found that using max-vertex bicliques to generate biclique constraints was usually detrimental to performance. As noted earlier, taking the intersection of unsafe intervals of adjacent edges usually causes the interval to be shortened. Because of this shortened interval, the resulting safe intervals used with SIPP will not cause wait actions to be generated with a long enough duration to avoid conflict. This can result in causing the conflict between two actions which caused the split (which we will call the \emph{core action pair}) to recur at a slightly later time, resulting in another split in the sub-tree for the same two actions.

In order to remedy this, we computed the biclique so that it only includes the pairs of actions whose unsafe interval is a superset of that for the \emph{core action pair}. For example, if the core action pair were actions 2 and 4, the biclique would include both actions 4 and 5, because its unsafe interval is a superset: $[0,0.71)\supseteq[0,0.58)$. On the other hand, if the core action pair were actions 2 and 5, action 4 could not be included in the set. In this way, the correct wait time is generated by SIPP and the same conflict is always avoided in the sub-tree. This approach of computing an \emph{interval-superset biclique} often results in a smaller biclique and thus a smaller set of biclique constraints than the max-vertex biclique approach, nevertheless, the result yields performance gains versus CCBS with the original splitting method in many settings.

Biclique constraints in CBS are not usually applicable in ``classic'' planar graphs (assuming agents' size is sufficiently small) since agents would only conflict with one other action at a time (e.g., agents crossing the same edge in opposite directions), resulting in a 1x1 biclique which is equivalent to classic constraints. However, with CCBS, biclique constraints are useful in planar graphs since multiple wait actions are possible for the same agent at a vertex at a specific point in time, which makes multiple conflicts possible.% This is because multiple safe intervals can exist for an agent at one or more adjacent vertices.

\subsection{Disjoint Splitting with Biclique Constraints}

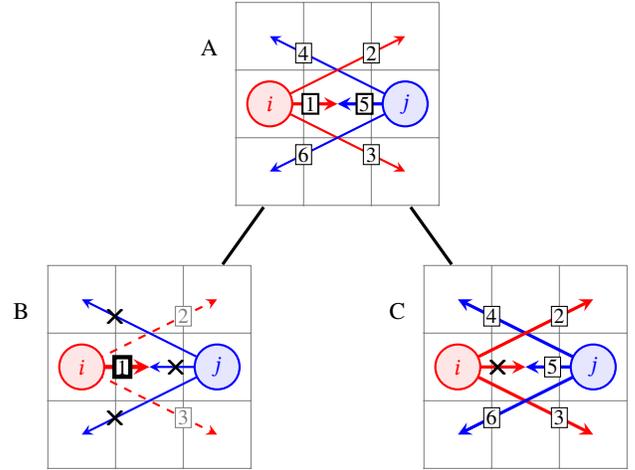
\begin{figure}[t!]
\begin{tikzpicture}

\node[inner sep=0pt] (A) at (0,0) {
\begin{tikzpicture}[scale=.9,
label/.style={rectangle,draw,black,fill=white,text=black,inner sep=1pt}]
\node (v) at (1.75,1.75) {\tiny $\;$};
\draw[step=1cm,color=gray,shift={(.5,0.25)}] ([shift={(-1.26,-1.51)}]v) grid (3,3);

\path[->,>=stealth,very thick,color=blue] (2.75,1.75)  edge node {} (2,1.75);
\node at (2.4,1.75) [label,thick] (l1) {\small 5 };
\path[->,>=stealth,thick,color=red] (1,1.75) edge node {} (3,.75);
\node at (2.5,1.0) [label] (l1) {\small 3 };
\path[->,>=stealth,thick,color=blue] (3,1.75)  edge node {} (1,.75);
\node at (1.5,1.0) [label] (l1) {\small 6 };
\path[->,>=stealth,very thick,color=red] (1.25,1.75) edge node {} (2,1.75);
\node at (1.6,1.75) [label,thick] (l1) {\small 1 };
\path[->,>=stealth,thick,color=blue] (3,1.75)  edge node {} (1,2.75);
\node at (1.5,2.5) [label] (l1) {\small 4 };
\path[->,>=stealth,thick,color=red] (1,1.75) edge node {} (3,2.75);
\node at (2.5,2.5) [label] (l1) {\small 2 };

\node[circle,draw=red,text=red,text=red,thick,fill=red!10,inner sep=0pt,minimum size=17pt] at ([shift={(-.75,0)}]v) {$i$};
\node[circle,draw=blue,text=blue,text=blue,thick,fill=blue!10,inner sep=0pt,minimum size=17pt] at ([shift={(1.25,0)}]v) {$j$};

\end{tikzpicture}
};

\node[inner sep=0pt] (B) at (-2.5,-3.5) {
\begin{tikzpicture}[scale=.9,
label/.style={rectangle,draw,black,fill=white,text=black,inner sep=1pt}]
\node (v) at (1.75,1.75) {\tiny $\;$};
\draw[step=1cm,color=gray,shift={(.5,0.25)}] ([shift={(-1.26,-1.51)}]v) grid (3,3);

\path[->,>=stealth,thick,color=blue] (2.75,1.75)  edge node {} (2,1.75);
\node at (2.4,1.75) [inner sep=0pt] (l1) {\Large $\pmb{\times}$ };
\path[->,>=stealth,thick,dashed,color=red] (1,1.75) edge node {} (3,.75);
\node at (2.5,1.0) [label,draw=gray,text=gray] (l1) {\small 3 };
\path[->,>=stealth,thick,color=blue] (3,1.75)  edge node {} (1,.75);
\node at (1.5,1.0) [inner sep=0pt] (l1) {\Large $\pmb{\times}$};
\path[->,>=stealth,ultra thick,color=red] (1.25,1.75) edge node {} (2,1.75);
\node at (1.6,1.75) [label,ultra thick] (l1) {\small 1 };
\path[->,>=stealth,thick,color=blue] (3,1.75)  edge node {} (1,2.75);
\node at (1.5,2.5) [inner sep=0pt] (l1) {\Large $\pmb{\times}$};
\path[->,>=stealth,thick,dashed,color=red] (1,1.75) edge node {} (3,2.75);
\node at (2.5,2.5) [label,draw=gray,text=gray] (l1) {\small 2 };

\node[circle,draw=red,text=red,text=red,thick,fill=red!10,inner sep=0pt,minimum size=17pt] at ([shift={(-.75,0)}]v) {$i$};
\node[circle,draw=blue,text=blue,text=blue,thick,fill=blue!10,inner sep=0pt,minimum size=17pt] at ([shift={(1.25,0)}]v) {$j$};
\end{tikzpicture}
};

\node[inner sep=0pt] (C) at (2.5,-3.5) {
\begin{tikzpicture}[scale=.9,
label/.style={rectangle,draw,black,fill=white,text=black,inner sep=1pt}]
\node (v) at (1.75,1.75) {\tiny $\;$};
\draw[step=1cm,color=gray,shift={(.5,0.25)}] ([shift={(-1.26,-1.51)}]v) grid (3,3);

\path[->,>=stealth,very thick,color=blue] (2.75,1.75)  edge node {} (2,1.75);
\node at (2.4,1.75) [label] (l1) {\small 5 };
\path[->,>=stealth,very thick,color=red] (1,1.75) edge node {} (3,.75);
\node at (2.5,1.0) [label] (l1) {\small 3 };
\path[->,>=stealth,very thick,color=blue] (3,1.75)  edge node {} (1,.75);
\node at (1.5,1.0) [label] (l1) {\small 6 };
\path[->,>=stealth,very thick,color=red] (1.25,1.75) edge node {} (2,1.75);
\node at (1.6,1.75) [inner sep=0pt] (l1) {\Large $\pmb{\times}$};
\path[->,>=stealth,very thick,color=blue] (3,1.75)  edge node {} (1,2.75);
\node at (1.5,2.5) [label] (l1) {\small 4 };
\path[->,>=stealth,very thick,color=red] (1,1.75) edge node {} (3,2.75);
\node at (2.5,2.5) [label] (l1) {\small 2 };

\node[circle,draw=red,text=red,text=red,thick,fill=red!10,inner sep=0pt,minimum size=17pt] at ([shift={(-.75,0)}]v) {$i$};
\node[circle,draw=blue,text=blue,text=blue,thick,fill=blue!10,inner sep=0pt,minimum size=17pt] at ([shift={(1.25,0)}]v) {$j$};
\end{tikzpicture}
};

\path[-, very thick] (A) edge (B);
\path[-, very thick] (A) edge (C);
\node at (-1.7,.75) {A};
\node at (-4.2,-2.75) {B};
\node at (.8,-2.75) {C};

\end{tikzpicture}
\caption{An example of a disjoint split with biclique constraints.}
\label{fig:DJBC}
\end{figure}

Recall that with disjoint splitting, in one CT child node there is a positive constraint for agent $i$, and in the other there is a negative constraint for agent $i$. Additionally, (specifically in the formulation for CCBS), a single negative constraint for agent $j$ is added to the node with the positive constraint. Doing this alone is quite effective~\cite{andreychuk2021improving}. However, it is still possible for a conflict with the positively constrained action to recur in the sub-tree of the CT. We propose that when performing a disjoint split between agent $i$ and agent $j$ that along with a positive constraint for agent $i$, a \emph{set} of negative constraints be added for agent $j$ to the same CT node so that while agent $i$ is forced to take an action, agent $j$ is forced to avoid \textit{all actions} which conflict with it.

To comprehensively enforce that agent $j$ avoids the positively-constrained action for agent $i$ (or vice-versa), we perform bipartite analysis. The procedure is outlined below:

\begin{enumerate}
    \item Construct a BCG for a core-action pair $\langle a_i, a_j\rangle$ as follows:
    \begin{enumerate}
        \item Create vertices for all of agent $j$'s actions that conflict with $a_i$ on the left.
        \item Create vertices for all of agent $i$'s actions that conflict with $a_j$ on the right.
        \item Add an edge from vertex $a_i$ to all vertices on the right.
        \item Add an edge from vertex $a_j$ to all vertices on the left.
    \end{enumerate}
    \item For each edge, compute the unsafe interval and annotate the edge with it.
    \item Select either agent $i$ or agent $j$ to receive a positive constraint.
    \item Select all vertices connected to the constrained agent's core action vertex to create negative constraints.
    \item Perform interval intersection for the positive constraint only, based on edges adjacent to the constrained agent's core action vertex.
\end{enumerate}

This procedure for computing time-annotated biclique constraints is simpler than outlined in the previous section because it is not necessary to test for the addition of edges for all pairs of vertices in the BCG and therefore the interval intersection and shortening steps are not necessary for the negative constraints. This is because the negative constraints must avoid the positive constraint's action for the entire duration. In step (3), our implementation chooses the negatively constrained agent to be the one with the most nodes. However, alternative approaches could be taken such as choosing the set with the largest cumulative unsafe interval, the largest mean unsafe interval, etc. Step (5) is necessary to obtain the proper unsafe interval for SIPP.

The placement of these constraints is illustrated in Figure \ref{fig:DJBC}. Node A is a node in the CT with a conflict between actions 1 and 5 shown in bold; this is the core action pair. Nodes B and C are child nodes. Node B shows the positive constraint for the red agent in bold for action 1, with negative constraints for the blue agent's conflicting actions shown with `x's. The other actions for the red agent in node B are dashed, meaning that they are no longer reachable because of the positive constraint. Finally, node C shows a single negative constraint that mirrors the positive constraint in node B.

In summary, while regular disjoint splitting would only create a positive constraint for agent $i$ with a single negative constraint for agent $j$ to child node 1 and a negative constraint for agent $i$ to child node 2, disjoint splitting with bicliques, or \emph{disjoint bicliques} (DB) adds multiple negative constraints for agent $j$ to child node 1 as well. This approach effectively eliminates further conflicts between agent $i$ and agent $j$ at the positively constrained action in the CT sub-tree. The effect of adding the extra constraints for agent $j$ is that agent $j$ avoids the positively-constrained action for agent $i$ from multiple paths, preemptively avoiding conflicts further down in the CT. Ultimately, a potentially exponential number of child nodes are pruned from the CT.

We now show that this approach is complete (if a solution to the problem instance exists) and that it also ensures optimality.

\begin{lemma}
\label{lem:complete}
Using biclique constraints with disjoint splitting (disjoint bicliques) never blocks a feasible solution from being found by CCBS.
\end{lemma}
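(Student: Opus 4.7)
The plan is to prove completeness by a case analysis on the two children of a disjoint-biclique split, showing that every feasible solution to the MAPF\textsubscript{R} instance must be compatible with at least one child's constraint set. Let $\langle a_i, a_j\rangle$ be the core action pair at which the split occurs, and assume without loss of generality that step (3) of the construction selects agent $i$ to receive the positive constraint. Denote by $N^+$ the child containing the positive constraint on $a_i$ together with the set of negative constraints on agent $j$, and by $N^-$ the child containing a single negative constraint on $a_i$. The claim to establish is that for every feasible $\Pi$, either $\Pi$ is consistent with $N^+.C$ or with $N^-.C$.

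First I would handle $N^-$. Since this child adds only a negative constraint blocking agent $i$ from executing $a_i$ in its unsafe interval, the argument here is identical to ordinary CCBS disjoint splitting, which is already known to be complete~\cite{andreychuk2021improving}: any feasible $\Pi$ in which agent $i$ does not execute $a_i$ during the constrained time range trivially satisfies $N^-.C$. The substantive step is $N^+$. Here I would argue that if $\Pi$ is feasible and agent $i$ executes $a_i$ in the positive constraint's time range in $\Pi$, then agent $j$'s path in $\Pi$ cannot contain any action $a_j'$ that conflicts with $a_i$ at an overlapping time, since that would contradict feasibility. By construction of the bipartite conflict graph (step (1)(a)), every such $a_j'$ is a left-vertex of the BCG and is therefore exactly one of the actions added as a negative constraint on $j$ in $N^+$ (step (4)). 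Hence the extra negative constraints on $j$ cannot eliminate any $\Pi$ compatible with the positive constraint.

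The remaining issue is to verify that the time annotations behave correctly, which I expect to be the main obstacle. Positive constraints in CCBS carry a time range rather than a single instant, and the intersection of unsafe intervals in step (5) must be shown to be precisely the set of times at which $a_i$ is simultaneously in conflict with every $a_j'$ on the left side. I would appeal to the unsafe-interval property established in Section~\ref{sec:unsafe-interval} and by Walker et al.~\cite{walker2020generalized}: intersecting unsafe intervals over the edges adjacent to $a_i$ guarantees that within the resulting interval, $a_i$ conflicts with all of the constrained $j$-actions simultaneously, so the positive constraint does not require $i$ to execute $a_i$ at any time at which a compatible $j$-action becomes safe. Symmetrically, the negative constraints on $j$ are annotated with the full unsafe interval of each $\langle a_i, a_j'\rangle$ pair, which is exactly the set of times at which feasibility forces $j$ to avoid $a_j'$ given that $i$ executes $a_i$.

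Finally I would conclude by combining the two cases: since every feasible $\Pi$ either has agent $i$ execute $a_i$ in the positive constraint's time range or it does not, $\Pi$ satisfies the constraints of at least one of $N^+$ or $N^-$. Therefore the disjoint biclique split preserves at least one feasible descendant of the current CT node whenever a feasible solution exists, and by induction on the depth of the CT, CCBS with DB never prunes away all feasible solutions.
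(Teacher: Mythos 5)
Your proof is correct and takes essentially the same route as the paper: an exhaustive case split on whether the feasible solution executes $a_i$ (the paper uses three cases --- contains $a_i$, contains $a_j$, contains neither --- but the decisive step is identical, namely that the extra negative constraints on agent $j$ only block actions conflicting with $a_i$, which no feasible solution containing $a_i$ can use anyway). Your additional attention to the time-range annotations and the interval intersection in step (5) goes slightly beyond what the paper's proof makes explicit, but it does not change the structure of the argument.
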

\begin{proof}
  Let $N$ be a CT node. Let $a_i$ and $a_j$ be a core action pair -- a conflicting pair of actions from $\pi_i\in N.\Pi$ and $\pi_j\in N.\Pi$ respectively.

  Let $\bar{N_i}$ be a child of $N$ which contains a single negative constraint blocking action $a_i$. Let $\hat{N_i}$ be a second child of $N$ with a single positive constraint forcing agent $i$ to perform action $a_i$, and multiple negative constraints $\bar{C_j}$, blocking agent $j$ from conflicting with $a_i$.
    
  Let $\Pi^*$ be the only feasible solution to the MAPF problem instance. There are three possible cases:
  \begin{enumerate}
      \item $\Pi^*$ contains $a_i$
      \item $\Pi^*$ contains $a_j$
      \item $\Pi^*$ contains neither $a_i$ nor $a_j$
  \end{enumerate}
  If case 1 is true, $\Pi^*$ is guaranteed to be found in the sub-tree of $\hat{N_i}$ because $a_i$ is enforced by the positive constraint. If case 2 is true, $\Pi^*$ is guaranteed to be found in the sub-tree of $\bar{N_i}$ because $a_j$ is not blocked. If case 3 is true, $\Pi^*$ is guaranteed to be found in the sub-tree of $\bar{N_i}$ because $a_i$ is blocked and $a_j$ is not enforced.

  Because the BCG only includes actions which conflict with $a_i$, it is not possible to block any action that does not conflict with $a_i$. Since in case 1, $\Pi^*$ cannot contain any actions that conflict with $a_i$, blocking all actions for agent $j$ which conflict with $a_i$ cannot preclude $\Pi^*$. Thus case 1 still holds with disjoint bicliques. 
\end{proof}

\begin{theorem}
  \label{thm:optimal}
  CCBS with disjoint bicliques ensures optimality.
\end{theorem}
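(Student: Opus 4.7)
The plan is to reduce the optimality proof to the standard CCBS optimality argument by leaning on Lemma \ref{lem:complete}. Two ingredients are needed: (i) best-first expansion of the CT by sum-of-costs, and (ii) the invariant that for every CT node $N$, the value $c(N.\Pi)$ computed by the low-level SIPP is a lower bound on the cost of any feasible solution that satisfies $N.C$. The proof then combines these with Lemma \ref{lem:complete} to conclude that the first conflict-free node returned by CCBS has cost $\leq c(\Pi^*)$, and hence equal to $c(\Pi^*)$ by optimality of $\Pi^*$.

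First I would use Lemma \ref{lem:complete} inductively to exhibit, for any optimal solution $\Pi^*$, a root-to-leaf sequence of CT nodes $N_0, N_1, \ldots$ such that $\Pi^*$ satisfies $N_k.C$ for every $k$. At each split into $\bar{N_i}$ and $\hat{N_i}$, the three-case analysis of Lemma \ref{lem:complete} selects a child consistent with $\Pi^*$; the only new work beyond that lemma is to verify that the \emph{additional} negative constraints $\bar{C_j}$ attached to the positive child do not exclude $\Pi^*$. This is exactly case 1 of Lemma \ref{lem:complete}: if $\Pi^*$ uses $a_i$, it cannot simultaneously use any action of agent $j$ that conflicts with $a_i$, so all constraints in $\bar{C_j}$ are automatically respected.

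Next I would establish invariant (ii). Since each $\pi_i \in N.\Pi$ is computed by SIPP as a shortest path for agent $i$ respecting $N.C$ (including any landmarks induced by positive constraints), and since any feasible $\Pi$ consistent with $N.C$ must also have $c(\pi_i)$ at least the SIPP shortest-path cost, we get $c(N.\Pi) \leq c(\Pi)$ for every such $\Pi$. Applying this to the sequence $N_0, N_1, \ldots$ with $\Pi = \Pi^*$ gives $c(N_k.\Pi) \leq c(\Pi^*)$ at every $k$. Because CCBS expands the CT in best-first order of $c(N.\Pi)$, the frontier always contains some $N_k$ with $c(N_k.\Pi) \leq c(\Pi^*)$, so the node CCBS selects for expansion has cost $\leq c(\Pi^*)$. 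When CCBS terminates on a conflict-free node $N^\dagger$, $N^\dagger.\Pi$ is a feasible solution with $c(N^\dagger.\Pi) \leq c(\Pi^*)$, hence $c(N^\dagger.\Pi) = c(\Pi^*)$.

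The main obstacle I expect is verifying invariant (ii) in the presence of positive constraints introduced by disjoint bicliques. A positive constraint forces agent $i$ through $a_i$ in a specific time range and forces $j$ to avoid the whole biclique of conflicting actions, which can push $c(N.\Pi)$ above the unconstrained shortest-path sum. The care needed is to show that the induced landmark time range for agent $i$ and the chosen unsafe interval used in step (5) are tight enough that whenever $\Pi^*$ is consistent with the positive constraint (case 1 of Lemma \ref{lem:complete}), $\Pi^*$ is also consistent with the specific time range SIPP interprets — i.e., the interval-intersection procedure inherited from Section \ref{sec:unsafe-interval} does not shorten the allowed arrival window so aggressively that an optimal path through $a_i$ is excluded. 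Once this consistency of positive-constraint time ranges with $\Pi^*$ is pinned down, the lower-bound invariant carries through and the rest of the argument is routine.
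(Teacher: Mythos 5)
Your proposal is correct and takes essentially the same route as the paper, which proves optimality in two sentences by citing Lemma~\ref{lem:complete} for non-exclusion of any feasible solution and then invoking best-first expansion of the OPEN list by sum-of-costs. Your version simply fills in the standard CBS machinery the paper leaves implicit (the root-to-leaf sequence of CT nodes consistent with $\Pi^*$ and the SIPP lower-bound invariant), and the subtlety you flag about positive-constraint time ranges is a genuine gap in the paper's own argument rather than a flaw in yours.
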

\begin{proof}
    Per Lemma \ref{lem:complete}, using disjoint bicliques can never preclude CCBS from finding a solution (if any exist). Assuming the OPEN list of CCBS is ordered by lowest cost, CCBS is guaranteed to find a lowest-cost feasible solution first before terminating.
\end{proof}

\subsection{Disjoint K-Partite Cliques}

So far, we have discussed the approach for combining bicliques with disjoint splitting for resolving a single conflict. It is often the case that multiple agents conflict with the positively constrained action. It is possible (and helpful) to additionally constrain these agents using negative constraints. This is done by performing step (1) outlined for DB for all agents that have a conflict with the positively constrained action to form two $k$-partite conflict graphs (KCG), one for agent $i$ and another for agent $j$. But it is explained more simply as enumerating all actions by all agents which conflict with $a_i$ and $a_j$ respectively and computing unsafe intervals. All other steps are straightforward. In step (3), our implementation chooses the agent with the largest KCG to get the positive constraint.

In order to avoid performing extra conflict checks to discover other agents that conflict with the positively-constrained action, we make use of a conflict count table (CCT)~\cite{walker2022multi} which is adapted for MAPF\textsubscript{R} to perform bookkeeping on all conflicts. The CCT is set up so that looking up conflicts in the table is indexed on a per-agent, per-action basis to expedite the KCG creation. In a vast majority of cases, the number of partitions in the KCG graphs are much smaller than the number of agents.

We show that DK is correct by a simple extension of Lemma \ref{lem:complete}:

\begin{lemma}
    \label{lem:completeK}
    The use of disjoint k-partite cliques with disjoint splitting never blocks a feasible solution in CCBS.
\end{lemma}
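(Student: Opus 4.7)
The plan is to extend the case analysis of Lemma \ref{lem:complete} from a single pair of conflicting agents to the $k$-partite situation in which multiple agents share actions conflicting with the positively-constrained action. First I would fix a CT node $N$, a core action pair $\langle a_i, a_j\rangle$, and write $\hat{N_i}$ for the child of $N$ constructed by DK: it carries a positive constraint forcing agent $i$ to execute $a_i$ together with a set $\bar{C}$ of negative constraints, one for every action of every agent $\ell \neq i$ that conflicts with $a_i$, annotated with the corresponding unsafe interval. Let $\bar{N_i}$ denote the sibling child holding the single negative constraint that blocks $a_i$ for agent $i$.

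Next I would fix an arbitrary feasible solution $\Pi^*$ and split into two cases based on whether $\Pi^*$ uses $a_i$. If $\Pi^*$ does not contain $a_i$, then the single-constraint sibling $\bar{N_i}$ does not preclude $\Pi^*$, and the argument is done for this branch by the same reasoning as cases 2 and 3 of Lemma \ref{lem:complete}. The structural difference between DB and DK plays no role here because the negative child is unchanged.

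The substantive step, and the place where this proof goes beyond Lemma \ref{lem:complete}, is the positive case. If $\Pi^*$ contains $a_i$, the positive constraint is satisfied by assumption, so the only concern is the expanded set $\bar{C}$ of negative constraints across multiple agents. I would argue by feasibility: by construction, step (1) of DK enumerates only actions that conflict with $a_i$, so every constraint in $\bar{C}$ blocks an action of some agent $\ell$ exactly on the unsafe interval where that action collides with $a_i$. If $\Pi^*$ violated any such constraint, then some agent $\ell$ in $\Pi^*$ would execute a conflicting action during its collision interval with $a_i$, creating a conflict with $a_i \in \Pi^*$ and contradicting feasibility. Hence no constraint in $\bar{C}$ is violated by $\Pi^*$, and $\Pi^*$ lies in the subtree of $\hat{N_i}$.

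The main obstacle I expect is stating the unsafe-interval semantics precisely enough that the contradiction in the positive case is airtight across all $k$ partitions simultaneously: a negative constraint for a non-chosen agent is only in force during the interval where its action would actually conflict with $a_i$, so feasibility of $\Pi^*$ rules out a violation regardless of how many other agents are involved. Because interval intersection is not applied to the negative constraints (as noted in the DB discussion), each negative unsafe interval is exact and the argument does not need to reason about shortened intervals. Once this is pinned down, completeness follows, and an analogue of Theorem \ref{thm:optimal} for DK follows by the same best-first ordering argument.
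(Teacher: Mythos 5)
Your proposal is correct and follows essentially the same argument as the paper: the negative child is unchanged (so the non-$a_i$ cases carry over from Lemma~\ref{lem:complete}), and in the positive case feasibility of $\Pi^*$ containing $a_i$ rules out any action conflicting with $a_i$, which is exactly the set of actions DK blocks. Your collapsing of the paper's cases 2 and 3 into a single ``does not contain $a_i$'' case and your explicit note about the unintersected unsafe intervals are minor refinements, not a different route.
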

\begin{proof}
    Following the definitions from Lemma \ref{lem:complete}, DK now adds negative constraints for multiple agents to $\hat{N_i}$. Since the constraints in $\bar{N_i}$ are unchanged, cases 2 and 3 still hold.

    Case 1 still holds because $\Pi^*$ cannot contain any action by any agent which conflicts with $a_i$, therefore just as the disjoint bicliques procedure ensures that only actions which conflict with $a_i$ are blocked for agent $j$, DK ensures that only actions which conflict with $a_i$ are blocked for all agents $j\neq i$ (or subset of agents $\in j\neq i$). Thus, DK can never block a feasible solution in any of the three cases.
\end{proof}

Finally, substituting Lemma \ref{lem:completeK} into Theorem \ref{thm:optimal}, we see that DK is also optimal. In summary, we now have a powerful capability for multi-agent, symmetry-breaking constraints, capable of eliminating even more nodes in the CT.

\section{Empirical Results}
\label{sec:results}

\begin{figure}[t!]
\centering
\begin{tikzpicture}
\begin{groupplot}[
    group style={
        group size=1 by 9,
        ylabels at=edge left,
        vertical sep=20pt,
        horizontal sep=0pt
    },
    xtick pos=left,
    ytick pos=left,
    ylabel near ticks,
    xlabel near ticks,
    footnotesize,
    width=7.0cm,
    height=3.3cm,
    ytick align=outside,
    xtick align=outside,
    grid=both,
    grid style={line width=.1pt},
    %major grid style={line width=.1pt},
    %%minor tick num=40,
    legend columns=6,
    legend style={at={(-.1,1.05)},draw=none,fill=none,column sep=.1ex,anchor=south west,font=\tiny},
    title style={at={(.6,1.6)},font=\large},
    tick label style={font=\tiny}
]

% den520
\nextgroupplot[ymax=100,ytick={0,50,100},xmin=5,xmax=50,xtick={0,10,...,100},ylabel=Success Rate (\%)]
\addplot[mark=square*,color=pink] coordinates {(4, 100) (8, 92) (12, 84) (16, 68) (20, 40) (24, 32) (28, 16) (32, 12)};
\addplot[mark=square*,color=brown] coordinates {(4, 100) (8, 100) (12, 100) (16, 96) (20, 92) (24, 72) (28, 52) (32, 28) (36, 12) (40, 8)};
\addplot[mark=*,color=teal] coordinates {(4, 100) (8, 92) (12, 84) (16, 72) (20, 44) (24, 32) (28, 16) (32, 4)};
\addplot[mark=triangle*,color=cyan] coordinates {(4, 100) (8, 100) (12, 100) (16, 96) (20, 92) (24, 72) (28, 52) (32, 28) (36, 12) (40, 8)};
\addplot[mark=triangle*,color=red] coordinates {(4, 100) (8, 100) (12, 92) (16, 80) (20, 52) (24, 40) (28, 20) (32, 12) (36, 4)};
%\addplot[mark=diamond*,color=black] coordinates {(4, 100) (8, 100) (12, 100) (16, 96) (20, 88) (24, 76) (28, 64) (32, 32) (36, 16) (40, 12)};
\addplot[mark=diamond*,color=black] coordinates {(4, 100) (8, 100) (12, 100) (16, 96) (20, 88) (24, 76) (28, 64) (32, 40) (36, 24) (40, 12) (44, 8) (48, 4)};
\legend {CCBS,Base,BP,BP+DS,BP+BC,BP+DK};

% boston
\nextgroupplot[ymax=100,ytick={0,50,100},xmin=2,xmax=80,xtick={0,10,...,100},ylabel=Success Rate (\%)]
\addplot[mark=square*,color=pink] coordinates {(10, 100) (20, 96) (30, 76) (40, 36) (50, 16) (60, 8) (70, 4)};
\addplot[mark=square*,color=brown] coordinates {(10, 100) (20, 100) (30, 100) (40, 88) (50, 48) (60, 32) (70, 12)};
\addplot[mark=triangle*,color=cyan] coordinates {(10, 100) (20, 100) (30, 100) (40, 88) (50, 48) (60, 32) (70, 12)};
\addplot[mark=*,color=teal] coordinates {(10, 100) (20, 96) (30, 84) (40, 44) (50, 20) (60, 8) (70, 4)};
\addplot[mark=triangle*,color=red] coordinates {(10, 100) (20, 96) (30, 84) (40, 48) (50, 24) (60, 8) (70, 4)};
%\addplot[mark=diamond*,color=black] coordinates {(10, 100) (20, 100) (30, 100) (40, 88) (50, 52) (60, 36) (70, 16)};
\addplot[mark=diamond*,color=black] coordinates {(10, 100) (20, 100) (30, 100) (40, 88) (50, 64) (60, 40) (70, 20) (80, 4)};

% empty 48-48
\nextgroupplot[ymax=100,ytick={0,50,100},xmin=2,xmax=80,xtick={0,10,...,100},ylabel=Success Rate (\%)]
\addplot[mark=square*,color=pink] coordinates {(10, 100) (20, 96) (30, 96) (40, 52) (50, 24) (60, 8)};
\addplot[mark=square*,color=brown] coordinates {(10, 100) (20, 96) (30, 96) (40, 80) (50, 48) (60, 20) (70, 4)};
\addplot[mark=*,color=teal] coordinates {(10, 100) (20, 96) (30, 96) (40, 56) (50, 20) (60, 4)};
\addplot[mark=triangle*,color=cyan] coordinates {(10, 100) (20, 96) (30, 96) (40, 80) (50, 48) (60, 20) (70, 4)};
\addplot[mark=triangle*,color=red] coordinates {(10, 100) (20, 96) (30, 96) (40, 72) (50, 24) (60, 8)};
%\addplot[mark=diamond*,color=black] coordinates {(10, 100) (20, 96) (30, 96) (40, 88) (50, 60) (60, 32) (70, 12)};
\addplot[mark=diamond*,color=black] coordinates {(10, 100) (20, 100) (30, 96) (40, 96) (50, 64) (60, 32) (70, 12) (80, 0)};

% maze
\nextgroupplot[ymax=100,ytick={0,50,100},xmin=2,xmax=30,xtick={0,10,...,100},ylabel=Success Rate (\%)]
\addplot[mark=square*,color=pink] coordinates {(4, 96) (8, 84) (12, 40) (16, 12) (20, 12) (24, 4)};
\addplot[mark=square*,color=brown] coordinates {(4, 100) (8, 96) (12, 56) (16, 28) (20, 16) (24, 4) (28, 4)};
\addplot[mark=*,color=teal] coordinates {(4, 96) (8, 84) (12, 40) (16, 16) (20, 12) (24, 4)};
\addplot[mark=triangle*,color=cyan] coordinates {(4, 100) (8, 96) (12, 56) (16, 28) (20, 16) (24, 4) (28, 4)};
\addplot[mark=triangle*,color=red] coordinates {(4, 100) (8, 88) (12, 40) (16, 20) (20, 16) (24, 4)};
%\addplot[mark=diamond*,color=black] coordinates {(4, 100) (8, 96) (12, 72) (16, 32) (20, 16) (24, 4) (28, 4)};
\addplot[mark=diamond*,color=black] coordinates {(4, 100) (8, 100) (12, 84) (16, 36) (20, 16) (24, 4) (28, 4)};

% random
\nextgroupplot[ymax=100,ytick={0,50,100},xmin=2,xmax=80,xtick={0,10,...,100},ylabel=Success Rate (\%)]
\addplot[mark=square*,color=pink] coordinates {(10, 100) (20, 96) (30, 84) (40, 44) (50, 4)};
\addplot[mark=square*,color=brown] coordinates {(10, 100) (20, 100) (30, 96) (40, 76) (50, 28) (60, 12)};
\addplot[mark=*,color=teal] coordinates {(10, 100) (20, 96) (30, 84) (40, 44) (50, 4)};
\addplot[mark=triangle*,color=cyan] coordinates {(10, 100) (20, 100) (30, 96) (40, 76) (50, 28) (60, 12)};
\addplot[mark=triangle*,color=red] coordinates {(10, 100) (20, 96) (30, 88) (40, 52) (50, 8)};
%\addplot[mark=diamond*,color=black] coordinates {(10, 100) (20, 100) (30, 100) (40, 80) (50, 40) (60, 12)};
\addplot[mark=diamond*,color=black] coordinates {(10, 100) (20, 100) (30, 100) (40, 92) (50, 48) (60, 12) (70, 0)};

% warehouse
\nextgroupplot[ymax=100,ytick={0,50,100},xmin=5,xmax=90,xtick={0,10,...,100},ylabel=Success Rate (\%)]
\addplot[mark=square*,color=pink] coordinates {(10, 100) (20, 100) (30, 92) (40, 72) (50, 36) (60, 16) (70, 12) (80, 4)};
\addplot[mark=square*,color=brown] coordinates {(10, 100) (20, 100) (30, 96) (40, 88) (50, 56) (60, 36) (70, 24) (80, 4)};
\addplot[mark=*,color=teal] coordinates {(10, 100) (20, 100) (30, 92) (40, 80) (50, 44) (60, 20) (70, 12) (80, 4)};
\addplot[mark=triangle*,color=cyan] coordinates {(10, 100) (20, 100) (30, 96) (40, 88) (50, 56) (60, 36) (70, 24) (80, 4)};
\addplot[mark=triangle*,color=red] coordinates {(10, 100) (20, 100) (30, 92) (40, 84) (50, 48) (60, 20) (70, 12) (80, 4)};
%\addplot[mark=diamond*,color=black] coordinates {(10, 100) (20, 100) (30, 100) (40, 92) (50, 80) (60, 44) (70, 32) (80, 16)};
\addplot[mark=diamond*,color=black] coordinates {(10, 100) (20, 100) (30, 100) (40, 92) (50, 76) (60, 52) (70, 32) (80, 20) (90, 0)};

% super-dense
\nextgroupplot[ymax=100,ytick={0,50,100},xmin=2,xmax=30,xtick={0,10,...,100},xlabel=Number of Agents,ylabel=Success Rate (\%)]
\addplot[mark=square*,color=pink] coordinates {(4, 100) (8, 72) (12, 52) (16, 24) (20, 4)};
\addplot[mark=square*,color=brown] coordinates {(4, 100) (8, 92) (12, 84) (16, 52) (20, 36) (24, 12)};
\addplot[mark=*,color=teal] coordinates {(4, 100) (8, 72) (12, 52) (16, 24) (20, 4)};
\addplot[mark=triangle*,color=cyan] coordinates {(4, 100) (8, 92) (12, 84) (16, 52) (20, 36) (24, 12)};
\addplot[mark=triangle*,color=red] coordinates {(4, 100) (8, 88) (12, 68) (16, 36) (20, 16)};
%\addplot[mark=diamond*,color=black] coordinates {(4, 100) (8, 96) (12, 92) (16, 64) (20, 44) (24, 24) (28, 4)};
\addplot[mark=diamond*,color=black] coordinates {(4, 100) (8, 96) (12, 92) (16, 76) (20, 48) (24, 32) (28, 4)};

\end{groupplot}

%\node[inner sep=0pt] at (3.6,2.9) {\Large Performance on 32-Neighbor Grid Benchmarks};

\node[inner sep=0pt] at (4.7,1.5) {\small den520d};
\node[inner sep=0pt] at (6.4,.86)
    {\includegraphics[width=1.72cm]{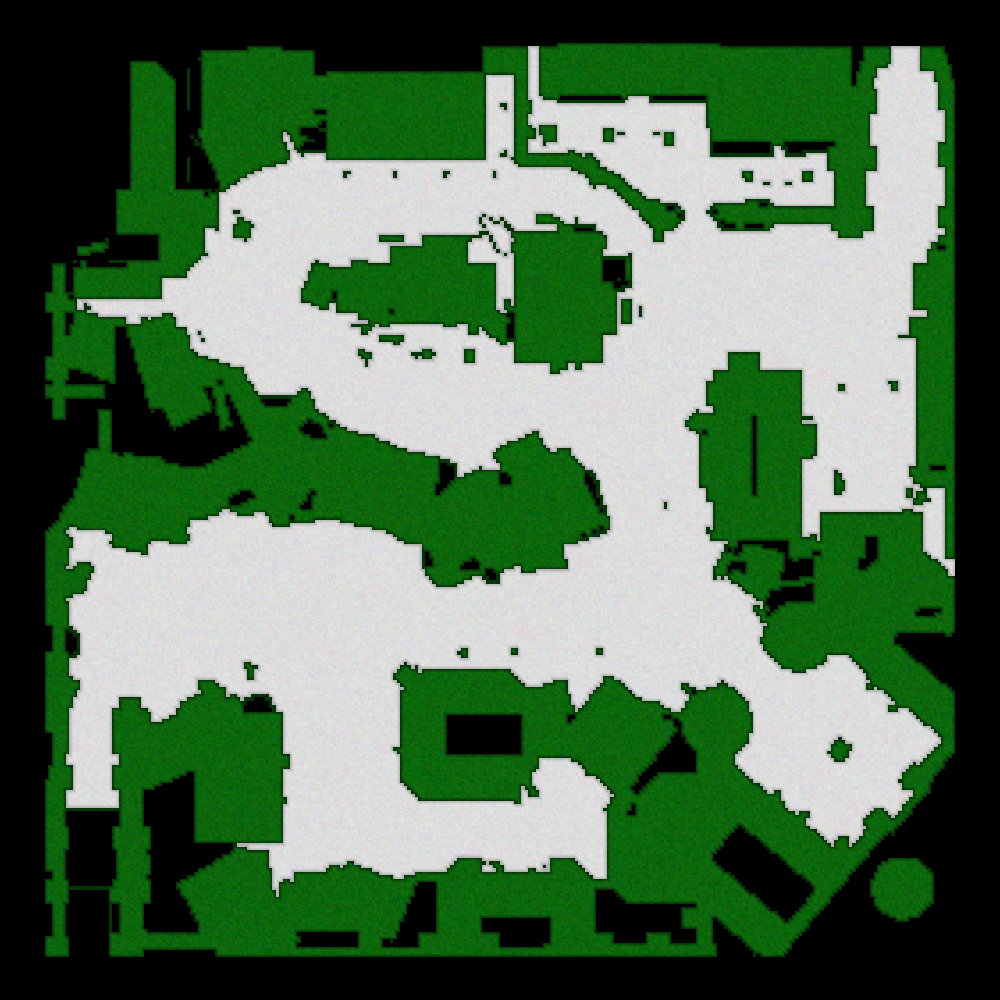}};

\node[inner sep=0pt] at (4.4,-.93) {\small Boston\_0\_256};
\node[inner sep=0pt] at (6.4,-1.55)
    {\includegraphics[width=1.72cm]{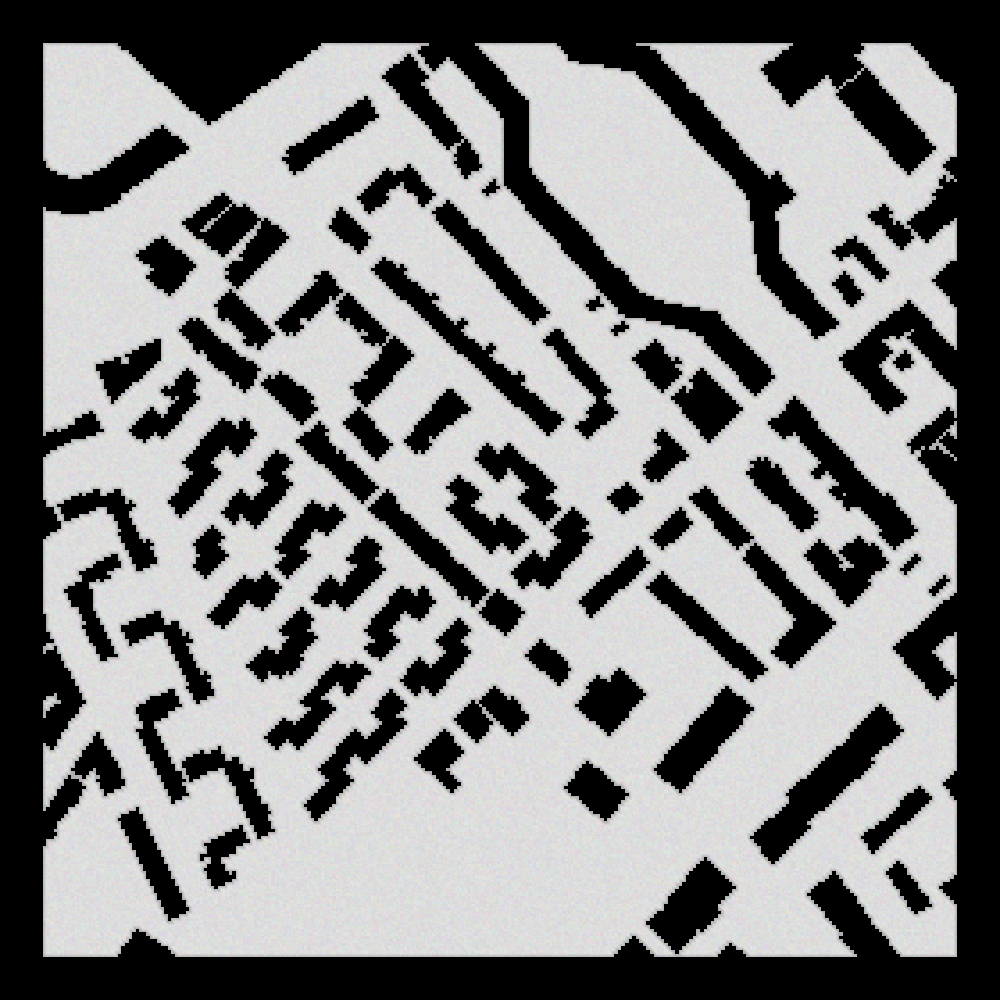}};
    
\node[inner sep=0pt] at (4.5,-3.37) {\small empty-48-48};
\node[inner sep=0pt] at (6.4,-3.95)
    {\includegraphics[width=1.72cm]{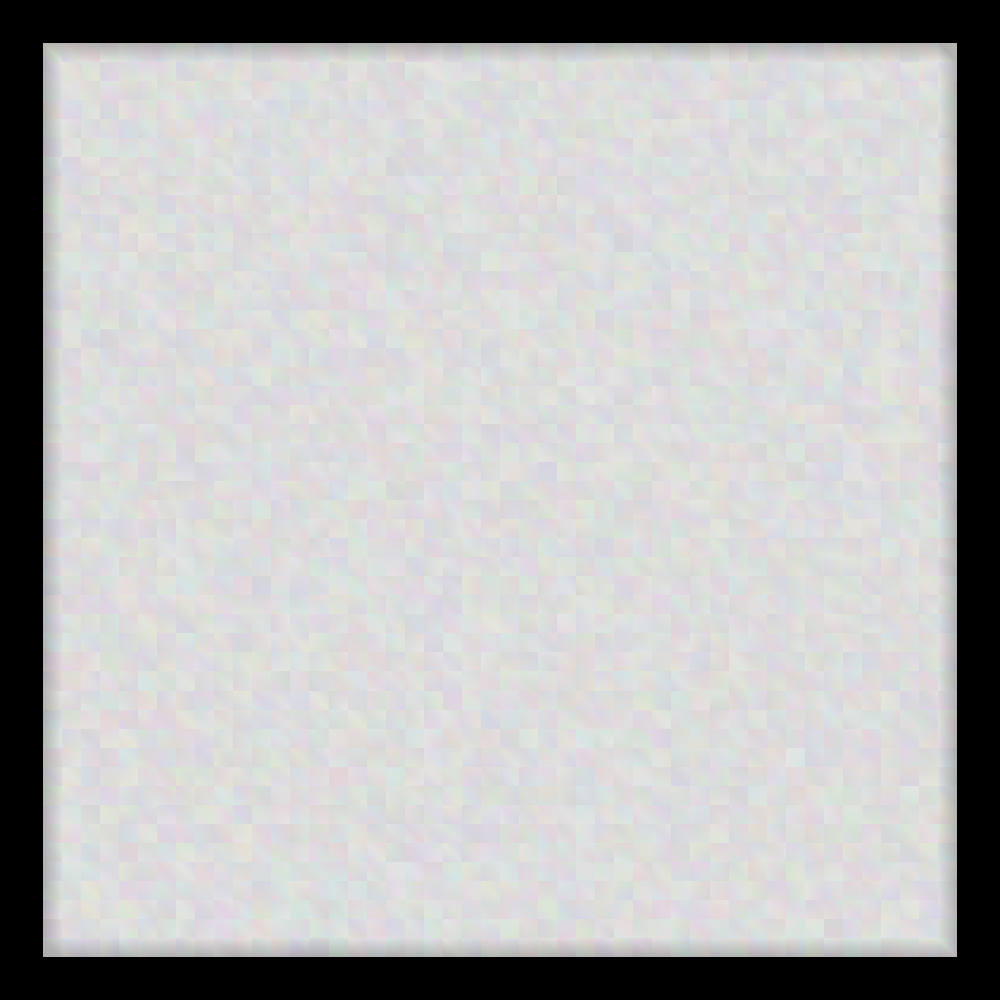}};

\node[inner sep=0pt] at (4.2,-5.75) {\small maze-128-128-10};
\node[inner sep=0pt] at (6.4,-6.40)
    {\includegraphics[width=1.72cm]{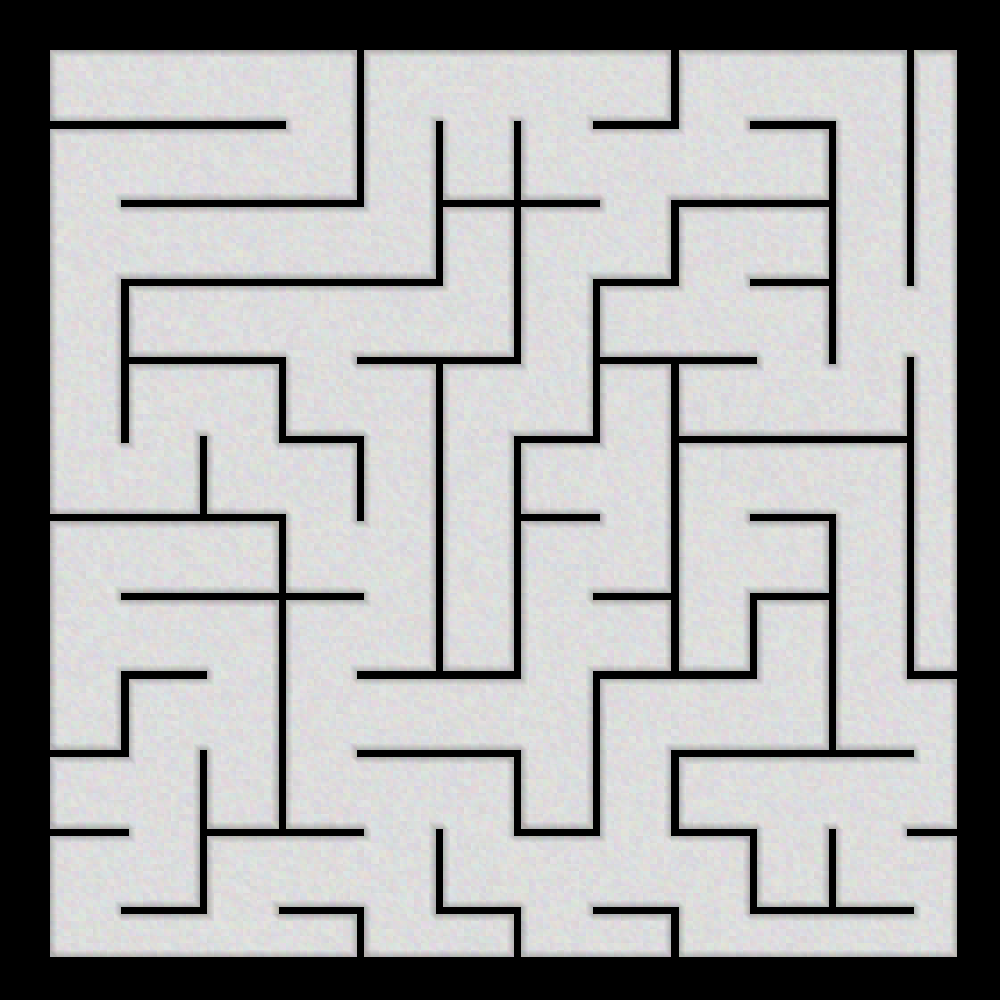}};

\node[inner sep=0pt] at (4.2,-8.15) {\small random-64-64-10};
\node[inner sep=0pt] at (6.4,-8.80)
    {\includegraphics[width=1.72cm]{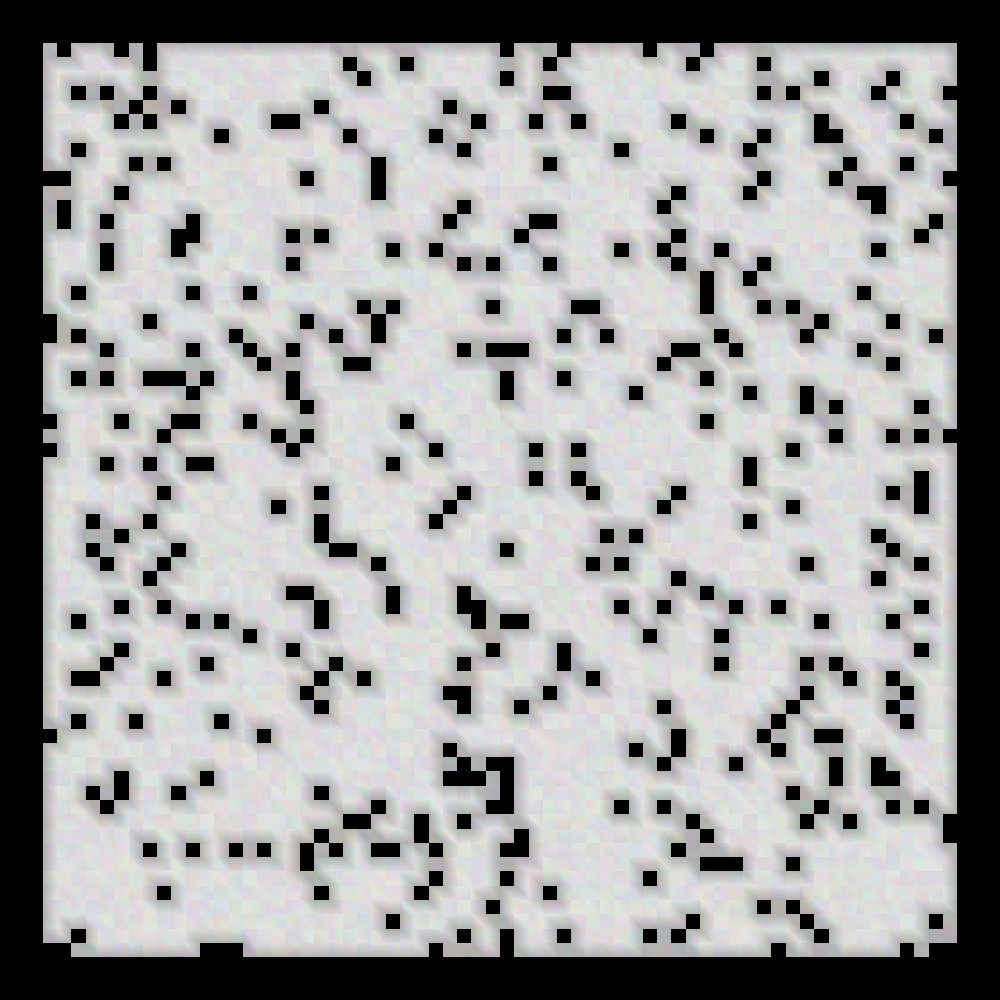}};
    
\node[inner sep=0pt] at (4.15,-10.57) {\scriptsize warehouse-10-20-10-2-2};
\node[inner sep=0pt] at (6.4,-11.25)
    {\includegraphics[width=1.72cm]{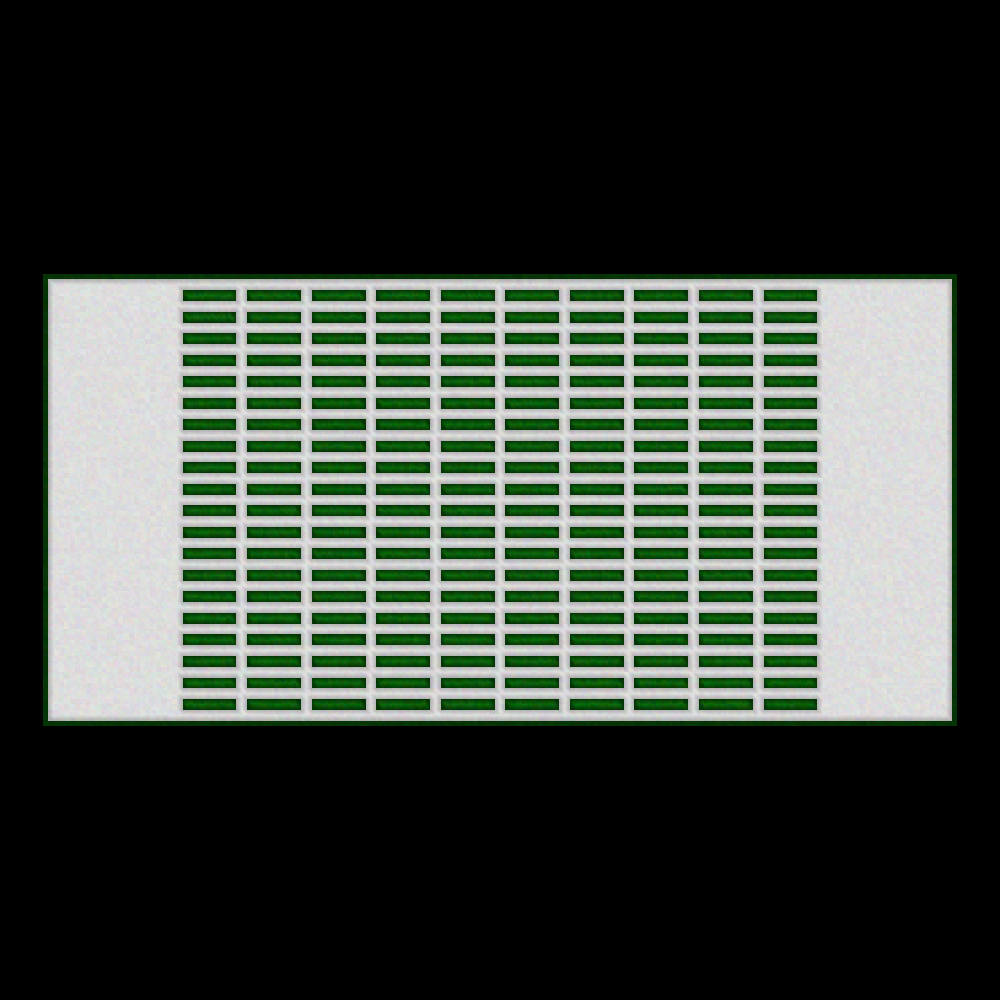}};

\node[inner sep=0pt] at (4.0,-13.02) {\small super-dense roadmap};
\node[inner sep=0pt] at (6.4,-13.68)
    {\includegraphics[width=1.72cm]{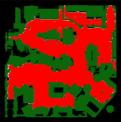}};

\end{tikzpicture}
\caption{Success rates on 32-neighbor grids and roadmaps.}
\label{fig:success}
\end{figure}

\begin{table}[h!]
\caption{Total problems solved in under 30 seconds on 4-neighbor grid MAPF benchmarks}
\label{tab:closed}
\centering
\resizebox{0.9\columnwidth}{!}{
\begin{threeparttable}
\begin{tabular}{l||rrrr}
 Map & Base & BP & DK & BP+DK \\
\specialrule{1pt}{0pt}{0pt}

Berlin\_1\_256       & 1,200 & \ul{\textbf{1,334}} & 1,200 & \ul{\textbf{1,334}} \\

Boston\_0\_256       & \textbf{1,112} & \textbf{1,140} & \textbf{1,112} & \ul{\textbf{1,152}} \\
Paris\_1\_256        & 1,432 & \ul{\textbf{1,524}} & 1,434 & \ul{\textbf{1,524}} \\
\specialrule{.2pt}{0pt}{0pt}
City Total           & 3,744 & \textbf{3,998} & 3,746 & \ul{\textbf{4,010}} \\
\midrule
den520d              &   \textbf{844} &   \textbf{860} &   \textbf{844} &   \ul{\textbf{868}} \\
brc202d              &   580 &   \textbf{604} &   586 &   \ul{\textbf{606}} \\
den312d              &   474 &   \textbf{510} &   474 &   \ul{\textbf{552}} \\
lak303d              &   474 &   \textbf{538} &   476 &   \ul{\textbf{552}} \\
orz900d              &   \ul{\textbf{626}} &   566 &   \textbf{622} &   564\\
ost003d              &   \textbf{562} &  \ul{\textbf{582}} &   \textbf{564} &   \ul{\textbf{582}} \\
\specialrule{.2pt}{0pt}{0pt}
DAO Total            & 3,560 & \textbf{3,660} & 3,566 & \ul{\textbf{3,724}} \\
\midrule
empty-8-8            &   312 &   316 &   314 &   \ul{\textbf{338}} \\
empty-16-16          &   \textbf{520} &   \textbf{520} &   \textbf{520} &   \ul{\textbf{546}} \\
empty-32-32          &   886 &   \ul{\textbf{920}} &   886 &   \textbf{918}\\
empty-48-48          & 1,058 & \textbf{1,110} & 1,058 & \ul{\textbf{1,144}}\\
\specialrule{.2pt}{0pt}{0pt}
Empty Total          & \textbf{2,776} & \textbf{2,866} & \textbf{2,778} & \ul{\textbf{2,946}} \\
\midrule
lt\_gallowstemplar   &   \textbf{632} &   \textbf{654} &   \textbf{634} &   \ul{\textbf{658}} \\
ht\_chantry          &   \textbf{570} &   \textbf{588} &   566 &   \ul{\textbf{592}}\\
ht\_mansion\_n       &   680 &   \textbf{726} &   680 &   \ul{\textbf{746}} \\
w\_woundedcoast      &   664 &   \textbf{6998} &   670 &   \ul{\textbf{706}} \\
\specialrule{.2pt}{0pt}{0pt}
DAO2 Total                & 2,546 & \textbf{2,666} & 2,550 & \ul{\textbf{2,702}} \\
\midrule
maze-32-32-2         &   \textbf{280} &   \textbf{278} &   \ul{\textbf{286}} &   \textbf{278}\\
maze-32-32-4         &   252 &   \ul{\textbf{256}} &   248 &   \textbf{254}\\
maze-128-128-2       &   \textbf{248} &   \textbf{254} &   \textbf{252} &   \ul{\textbf{256}}\\
maze-128-128-10      &   340 &   \ul{\textbf{364}} &   340 &   \textbf{360}\\
\specialrule{.2pt}{0pt}{0pt}
Maze Total            & 1,116 & \ul{\textbf{1,152}} & \textbf{1,126} & \textbf{1,148} \\
\midrule
random-64-64-10      & \textbf{1,210} & \textbf{1,222} & \textbf{1,210} & \ul{\textbf{1,266}} \\
random-64-64-20      &   \textbf{862} &   \textbf{860} &   \textbf{852} &   \ul{\textbf{920}}\\
random-32-32-10      &   614 &   \ul{\textbf{674}} &   614 &   \ul{\textbf{674}}\\
random-32-32-20      &   424 &   \textbf{438} &   424 &   \ul{\textbf{448}}\\
\specialrule{.2pt}{0pt}{0pt}
Random Total         & \textbf{3,110} & \textbf{3,194} & \textbf{3,100} & \ul{\textbf{3,308}} \\
\midrule
room-64-64-16        &   \textbf{424} &   \ul{\textbf{432}} &   \textbf{424} &   \textbf{428}\\
room-64-64-8         &   290 &   288 &   290 &   \ul{\textbf{314}}\\
room-32-32-4         &   286 &   \textbf{298} &   286 &   \ul{\textbf{306}}\\
\specialrule{.2pt}{0pt}{0pt}
Room Total             & \textbf{1,000} & \textbf{1,018} & \textbf{1,000} & \ul{\textbf{1,048}} \\
\midrule
w-10-20-10-2-2 & 1,124 & \textbf{1,372} & 1,124 & \ul{\textbf{1,390}} \\
w-10-20-10-2-1 & 1,022 & \ul{\textbf{1,078}} & 1,022 & \textbf{1,066} \\
w-20-40-10-2-2 & 1,890 & \ul{\textbf{1,998}} & 1,890 & \ul{\textbf{1,998}} \\
w-20-40-10-2-1 & 1,740 & 1,870 & 1,870 & \ul{\textbf{2,038}} \\

\specialrule{.2pt}{0pt}{0pt}

Warehouse Total           & 5,776 & \textbf{6,318} & 5,776 & \ul{\textbf{6,492}} \\
\specialrule{2pt}{0pt}{0pt}

Total & 23,628 & \textbf{24,872} & 23,642 & \ul{\textbf{25,378}} \\

\end{tabular}
  %\begin{tablenotes}
    %\item[1] Previous state-of-the-art.
  %\end{tablenotes}
\end{threeparttable}
}
\end{table}

\begin{table}[t!]
\caption{Summary of problems solved in under 30 seconds on 8-, 16- and 32-neighbor grid MAPF benchmarks}
\label{tab:summary}
\centering
\resizebox{0.9\columnwidth}{!}{
\begin{threeparttable}
\begin{tabular}{l||rrrr}
\multicolumn{1}{c||}{Map Type} & Base & BP & DK & BP+DK \\
\toprule[1.5pt]

 & \multicolumn{4}{c}{\textbf{8-neighbor grids}} \\
\specialrule{1.5pt}{0pt}{0pt}
city                 & 4,820 & \textbf{5,016} & 4,982 & \ul{\textbf{5,084}} \\
DAO                  & 4,298 & \textbf{4,394} & \textbf{4,382} & \ul{\textbf{4,474}} \\
empty                & 3,304 & \textbf{3,560} & 3,396 & \ul{\textbf{3,628}} \\
DAO2                 & 3,174 & \textbf{3,318} & 3,242 & \ul{\textbf{3,380}} \\
maze                 & \textbf{1,304} & \textbf{1,292} & \textbf{1,298} & \ul{\textbf{1,336}} \\
random               & 3,486 & \textbf{3,628} & 3,514 & \ul{\textbf{3,760}} \\
room                 & \textbf{1,142} & \textbf{1,148} & \textbf{1,124} & \ul{\textbf{1,168}} \\
warehouse            & 6,688 & \textbf{6,990} & \textbf{6,722} & \ul{\textbf{7,280}} \\
\specialrule{1.0pt}{0pt}{0pt}
Total & 28,216 & \textbf{29,346} & 28,660 & \ul{\textbf{30,110}} \\
\midrule

 & \multicolumn{4}{c}{\textbf{16-neighbor grids}} \\
\specialrule{1.5pt}{0pt}{0pt}
city                 & 3,906 & \textbf{4,044} & 3,958 & \ul{\textbf{4,148}} \\
DAO                  & 3,578 & 3,616 & 3,602 & \ul{\textbf{3,800}} \\
empty                & 3,252 & \textbf{3,326} & 3,278 & \ul{\textbf{3,366}} \\
DAO2                 & 2,598 & \textbf{2,662} & \textbf{2,624} & \ul{\textbf{2,776}} \\
maze                 & \textbf{1,142} & \textbf{1,154} & \textbf{1,146} & \ul{\textbf{1,188}} \\
random               & 3,080 & \textbf{3,124} & \textbf{3,110} & \ul{\textbf{3,260}} \\
room                 & \textbf{1,044} & \textbf{1,058} & \textbf{1,040} & \ul{\textbf{1,082}} \\
warehouse            & 6,516 & \textbf{6,650} & 6,550 & \ul{\textbf{6,926}} \\
\specialrule{1pt}{0pt}{0pt}
Total & 25,116 & \textbf{25,634} & 25,308 & \ul{\textbf{26,546}} \\
\midrule

 & \multicolumn{4}{c}{\textbf{32-neighbor grids}} \\
\specialrule{1.5pt}{0pt}{0pt}
city                 & 3,310 & 3,358 & 3,414 & \ul{\textbf{3,680}} \\
DAO                  & 2,860 & 2,884 & \textbf{2,940} & \ul{\textbf{3,058}} \\
empty                & 2,798 & 2,058 & 2,872 & \ul{\textbf{3,116}} \\
DAO2                 & 2,190 & 2,218 & 2,224 & \ul{\textbf{2,342}} \\
maze                 & 1,030 & 1,028 & 1,060 & \ul{\textbf{1,124}} \\
random               & 2,840 & 2,866 & 2,940 & \ul{\textbf{3,186}} \\
room                 & 1,032 & 1,042 & 1,040 & \ul{\textbf{1,106}} \\
warehouse            & 5,832 & 6,128 & 5,926 & \ul{\textbf{6,478}} \\

\specialrule{1pt}{0pt}{0pt}
Total & 21,892 & 21,582 & 22,416 & \ul{\textbf{24,090}}\\

\end{tabular}
    %\begin{tablenotes}
        %\item[1] Previous state-of-the-art.
    %\end{tablenotes}
\end{threeparttable}
}
\end{table}

\begin{table}[t!]
\caption{Total problems solved and mean runtime on roadmaps}
\label{tab:roadmaps}
\centering
\resizebox{0.9\columnwidth}{!}{
\begin{threeparttable}
\begin{tabular}{l||rrrr}
\multicolumn{1}{c||}{Map}  
 & Base & BP & DK & BP+DK \\
\toprule[1.5pt]
sparse               &   \textbf{434$\pm$8} &   396$\pm$8 &   \textbf{440$\pm$8} &   \ul{\textbf{444$\pm$8}} \\
dense                &   604$\pm$10 &   604$\pm$10 &   630$\pm$12 &   \ul{\textbf{712$\pm$12}} \\
super-dense          &   402$\pm$10 &   402$\pm$10 &   442$\pm$10 &   \ul{\textbf{474$\pm$11}} \\
\specialrule{1pt}{0pt}{0pt}
Total              & 1,440 &1,440 & 1,512 & \ul{\textbf{1,630}} \\
\end{tabular}
  %\begin{tablenotes}
    %\item[1] Previous state-of-the-art.
  %\end{tablenotes}
\end{threeparttable}
}
\end{table}

We now analyze the enhancements described in the previous section, namely: bypass, which is new to CCBS in this paper, biclique constraints (BC) which is newly formulated in this paper, and disjoint k-partite cliques (DK) which is new. All tests in this section were performed single-threaded, on cloud compute instances that report an Intel Xeon 2.5GHz processor. In addition to the three roadmaps: ``sparse'', ``dense'' and ``super-dense'' and the four grid maps focused on by the original CCBS authors, we test our enhancements in all 44 of the MAPF grid benchmarks~\cite{stern2019multi} with $2^k$ neighborhood~\cite{neighborhoods} connectivities, namely 4-, 8-, 16- and 32-neighborhoods. Agents are circular, with a radius of $\sqrt{2}/4$. All tests were run by starting with 5 agents and incrementing the number of agents by 2 until the problem instance became unsolvable in under 30 seconds. We thank the original CCBS authors for making their code publicly available. Our implementation is based on theirs and is also freely available\footnote{https://github.com/thaynewalker/CCBS}. We updated some of the memory management, which made it up to 60\% faster than the  original. All of our results compare against our enhanced code. For this reason, some of our baseline results differ from previously published ones.

We will describe each of our experiments in turn and then discuss them all together. Figure \ref{fig:success} shows the success rates; the percentage of problem instances solvable in under 30 seconds for increasing numbers of agents. The rates are computed over 25 problem instances for each map. All plots, except for the super-dense roadmap are on 32-neighbor grids. We remind the reader that results are optimal in terms of cost (i.e., shortest non-conflicting paths), and that adding a single agent to a problem instance represents an exponential increase in the problem instance's computational complexity (it is exponential in the number of agents~\cite{yu2013multi}). Hence, even a small gain (e.g., a few percentage points) can mean that a multiplicative reduction in work is actually realized.

Tables \ref{tab:closed}, \ref{tab:summary} and \ref{tab:roadmaps} show the sum total of the maximum number of agents solvable over 25 problem instances in under 30 seconds per map. The statistics with the best result underlined and those within the 95th percentile of the best are in bold. The label ``CCBS'' in Figure \ref{fig:success} is CCBS with all enhancements from the original authors except DS. The label ``Base'' is CCBS with all enhancements, including DS, the previous state of the art. In all of the tables, the column labeled ``Base'' is also the previous state of the art.

Table \ref{tab:closed} shows totals for all grid maps for 4-neighbor grids, totals for each class of maps, and an overall total. 4-neighbor grids in this context are similar to those for ``classic'' MAPF, except instead of fixed wait actions, agents may wait an arbitrary amount of time. Table \ref{tab:summary} shows aggregate group results for the same groupings as Table \ref{tab:closed}, and the grand total for all remaining connectivity settings, namely 8-, 16- and 32-connected grids. Table \ref{tab:roadmaps} shows the results for all settings on probabilistic roadmaps~\cite{kavraki1996probabilistic}. ``sparse'' contains 158 nodes and 349 edges with a mean vertex degree of 4.2, ``dense'' contains 878 nodes and 7,341 edges with a mean degree of 16.7, and ``super-dense'' contains 11,342 vertices and 263,533 edges with a mean degree of 100.4.

\subsection{Discussion}

In Figure \ref{fig:success}, the BP enhancement (teal circle) success rate is not significantly better or worse than CCBS alone (brown square). It never performs worse than CCBS in any of the 32-neighbor maps. On the other hand, in Table \ref{tab:closed}, which is on 4-neighbor maps, we see that adding BP to Base (previous state-of-the-art) yields statistically significant gains in nearly every map. Table 2 shows a similar trend for all 8-neighbor grids, but as the connectivity is increased to 16- and 32-neighbor grids, the improvement from BP becomes less significant. The cost symmetries in these settings decreases as the connectivity increases. Ultimately, in the road maps, which have no cost symmetries, we see no improvement with BP, nor any significant decline in performance. From this we learn that (1) BP offers significant performance improvements when there are many cost symmetries in the graph, and that the performance benefits are directly proportional to the amount of cost symmetries. (2) The cost of BP is not significant, even with no symmetries in the map. (3) BP is complimentary to DS and BC.

The trend in Figure \ref{fig:success} shows that BC provides a consistent improvement over CCBS in 32-neighbor grids. It is complimentary to DS, as evidenced by the fact that BP+DK performs better than both BP+DS and BP+BC. The trend in all tables shows that the performance improvement of DK is correlated to the mean vertex degree in the graph. The amount of improvement is especially significant in the dense and super-dense  road maps, where BP offers no benefit. In these roadmaps, the branching factor is high, with many edge crossings, a situation which is conducive to large bicliques. Still, the cost of DK is not significant in planar graphs. From this we learn that (1) DK offers significant performance improvements when many edges cross in the graph. (2) BP and DK are complimentary, as evidenced by BP being stronger in settings with many cost symmetries and DK being stronger in settings with few cost symmetries.

With CCBS, because an agent may have multiple different wait actions at a location, BCGs larger than 1x1 are possible, thus a small benefit over Base is shown with BC in Table \ref{tab:closed}. 

Because the BP enhancement is never significantly detrimental to performance, and provides a benefit even when there are few cost symmetries, it can be used generally. The DK enhancement tends to work best in many cases where BP does not, and it also has no significant execution cost, hence it can be used generally.

Finally, combining BP with DK consistently beats state-of-the-art by statistically significant margins. Compared to the previous state-of-the-art, in super-dense roadmaps our enhancements allow solutions for up to 10\% more agents and in grid maps and for up to 20\% more agents in super dense roadmaps.

\section{Conclusion and Future Work}
We have formulated and tested novel enhancements for CCBS and tested performance in roadmaps and on the full set of MAPF benchmarks on various connectivity settings. We found that disjoint splitting, bypass and biclique constraints are are complimentary and that using them together allows a statistically significant improvement over state-of-the-art generally for all MAPF benchmarks and sparse to super dense graphs. Bypassing is most effective in graphs with cost symmetries and biclique constraints are most effective in graphs with few cost symmetries and densely crossing edges.

\section*{Acknowledgments}
The research at the University of Denver was supported by the National Science Foundation (NSF) grant number 1815660 and Lockheed Martin Corp. 
Research at the university of Alberta was funded by the Canada CIFAR AI Chairs Program. We acknowledge the support of the Natural Sciences and Engineering Research Council of Canada (NSERC). Research at Ben Gurion University was supported by BSF grant number 2017692.

\bibliographystyle{ACM-Reference-Format}
\bibliography{main}

\end{document}